\documentclass[11pt]{article}

\usepackage[margin=1.1in]{geometry}
\usepackage[T1]{fontenc}
\usepackage[utf8]{inputenc}
\usepackage{lmodern}
\usepackage{microtype}
\usepackage{amsmath,amssymb,amsthm,mathtools}
\usepackage{graphicx}
\usepackage{booktabs,tabularx,needspace,placeins}
\usepackage[authoryear,round]{natbib}
\usepackage[colorlinks=true,linkcolor=blue,citecolor=blue,urlcolor=blue]{hyperref}

\numberwithin{equation}{section}

\theoremstyle{plain}
\newtheorem{theorem}{Theorem}[section]
\newtheorem{corollary}[theorem]{Corollary}
\newtheorem{lemma}[theorem]{Lemma}
\newtheorem{proposition}[theorem]{Proposition}
\theoremstyle{definition}
\newtheorem*{definition}{Definition}
\theoremstyle{remark}
\newtheorem*{remark}{Remark}
\theoremstyle{plain}

\DeclareMathOperator{\Var}{Var}
\DeclareMathOperator{\Cov}{Cov}

\DeclareMathOperator{\TC}{TC}

\newcommand{\R}{\mathbb{R}}

\newcommand{\bP}{\mathbb{P}}
\newcommand{\bE}{\mathbb{E}}

\newcommand{\norm}[1]{\left\lVert #1 \right\rVert}
\newcommand{\abs}[1]{\left\lvert #1 \right\rvert}

\newcommand{\rhoV}{\rho_{\mathrm V}}
\newcommand{\rhoX}{\rho_{\mathrm X}}

\title{One-step lowest-variance selection in a Gaussian random-field model
motivated by masked diffusion:\\
Total correlation and a square-root collision threshold}
\author{Linjun Li
\thanks{Department of Mathematics, University of Pennsylvania, Philadelphia, PA.} 
}
\date{}

\hypersetup{
  pdftitle={One-step lowest-variance selection in a Gaussian random-field model motivated by masked diffusion: Total correlation and a square-root collision threshold},
  pdfauthor={Linjun Li},
  pdfsubject={A stylized one-step stochastic-geometry model motivated by confidence-based masked diffusion schedules},
  pdfkeywords={masked diffusion motivation, one-step selection, Gaussian random field, stochastic geometry, total correlation, Poisson approximation}
}

\begin{document}
\maketitle
\begin{abstract}
Motivated by confidence-guided parallel unmasking in masked discrete diffusion, we study a single selection step in a stylized Gaussian random-field model. A locally dependent nonnegative score field represents positionwise uncertainty, and the scheduler selects the K positions with the smallest scores. Dependence among the selected positions is measured through a distance-dependent Gaussian correlation model. This separation provides a tractable framework for quantifying how the geometry of low-score locations affects the dependence cost of factorized parallel decoding. We establish two complementary results. In a conservative sub-square-root regime, the conditional Gaussian total correlation of the selected block vanishes in probability. At the square-root scale, it remains non-negligible with positive asymptotic probability and admits a strictly positive expectation lower bound. Synthetic experiments support the predicted finite-size behavior. These results provide a rigorous stochastic-geometry baseline for understanding how budget size, score dependence, and spatial correlation jointly shape one-step confidence-based selection in masked discrete diffusion.
\end{abstract}

\noindent\textbf{Keywords:} masked diffusion motivation; one-step selection;
Gaussian random field; stochastic geometry; small-ball probability; total
correlation; Poisson approximation.

\section{Introduction}\label{sec:introduction}
We
analyze a single static selection step in a stylized random-field model:
from a length-\(N\) locally dependent scalar score field, select the \(K\)
smallest values, and then evaluate a prescribed distance-dependent Gaussian
dependence cost (total correlation) on the selected indices.  The model contains no forward
corruption process, learned reverse transition, context evolution, remasking, or
multi-step schedule.  Its purpose is to isolate a stochastic-geometry question
suggested by confidence-based parallel decoding:

\begin{quote}
In one selection step, how quickly may the budget \(K_N\) grow with sequence
length \(N\) while the selected block has negligible dependence cost in the
stylized Gaussian model?
\end{quote}

The motivation comes from the factorization used by parallel categorical
decoders.  Conditional on the current masked context, let \(P_S\) be the target
joint categorical law on a selected set \(S\), let \(P_i\) be its one-position
marginals, and let \(Q_i\) be the one-position laws used by a product decoder.
Whenever the divergences are finite,
\begin{equation}\label{eq:intro-kl-decomposition}
    D_{\mathrm{KL}}\!\left(P_S\,\middle\|\,\bigotimes_{i\in S}Q_i\right)
    =
    \underbrace{D_{\mathrm{KL}}\!\left(
      P_S\,\middle\|\,\bigotimes_{i\in S}P_i
    \right)}_{\text{categorical total correlation}}
    +
    \sum_{i\in S}D_{\mathrm{KL}}(P_i\|Q_i).
\end{equation}
Thus accurate marginal predictions do not by themselves justify a parallel
product update \citep{watanabe1960information,coverthomas2006elements}.
Equation~\eqref{eq:intro-kl-decomposition} concerns the actual categorical
decoder.  The Gaussian quantity analyzed below is only an analytically tractable
cost motivated by its dependence term.  We neither derive that Gaussian cost
from a trained categorical decoder nor assume that the two total correlations
are numerically equal.

Discrete diffusion replaces continuous Gaussian noising by categorical,
absorbing, masking, or continuous-time jump processes
\citep{austin2021structured,hoogeboom2021argmax,campbell2022continuous,
hoogeboom2021ardm}.  Masked variants permit out-of-order and parallel generation
of tokenized images and language
\citep{chang2022maskgit,chang2023muse,lou2024sedd,sahoo2024mdlm}.
Large diffusion language models have made schedule design operational at scale,
and recent work analyzes token ordering, confidence-based unmasking, learned
policies, and parallel budgets
\citep{nie2025llada,kim2025tokenordering,li2025provable,
chen2026optimal,hong2026policies,jazbec2026unmasking}.  Those papers study
components of actual reverse-time samplers.  The present paper instead uses
that literature as motivation for a one-step model of score geometry and
dependence cost.

A common operational rule is to unmask positions with high marginal confidence.
For a categorical prediction \(p_i\), one possible scalar uncertainty summary is
the trace of its one-hot covariance,
\[
    C_i=\operatorname{diag}(p_i)-p_ip_i^\top,
    \qquad
    \operatorname{tr}(C_i)=1-\|p_i\|_2^2.
\]
This example motivates an ordering only.  We do not identify the vocabulary-
dimensional covariance \(C_i\) with the Gaussian covariance introduced below,
and we do not claim that the resulting score field is calibrated to any specific
neural confidence measure.  The modeling assumption is simply that smaller
values of a scalar score correspond to positions that a one-step rule would rank
as more confident.

Fix an integer \(m\ge1\) and two correlation parameters
\(0<|\rhoV|<1\) and \(0<|\rhoX|<1\).  Let \((Y_i)_{i\ge1}\) be a stationary
\(\mathbb R^m\)-valued Gaussian AR(1) field with parameter \(\rhoV\), and set
\begin{equation}\label{eq:intro-variance-field}
    V_i=\|Y_i\|^2.
\end{equation}
The field \((V_i)\) supplies the ordering scores; \(\rhoV\) controls clustering
of unusually small scores.  Separately, for the first \(N\) positions define
\begin{equation}\label{eq:intro-random-covariance}
    (\Sigma_N)_{ij}
    =\sqrt{V_iV_j}\,\rhoX^{|i-j|},
    \qquad 1\le i,j\le N.
\end{equation}
The indices of the \(K\) smallest diagonal entries form \(S_{N,K}\).  An
auxiliary vector \(X^{(N)}\mid\Sigma_N\sim N(0,\Sigma_N)\) is then used solely
to assign a Gaussian dependence cost to those indices.  The construction is
\begin{equation}\label{eq:intro-hierarchy}
    (Y_i)_{i=1}^N
    \longrightarrow
    (V_i)_{i=1}^N
    \longrightarrow
    \Sigma_N,
    \qquad
    \Sigma_N\longrightarrow S_{N,K},
    \qquad
    \Sigma_N\longrightarrow\mathcal L(X^{(N)}\mid\Sigma_N).
\end{equation}
The two parameters \(\rhoV\) and \(\rhoX\) are externally specified and need
not arise from a common categorical distribution.  This separation makes the
probability calculation transparent, but it is also why the construction should
be read as a stylized one-step random-field model rather than as a derived model
of masked diffusion.

\begin{table}[t]
\centering
\caption{Objects in the one-step random-field model.}
\label{tab:model-objects}
\begin{tabularx}{\textwidth}{@{}p{0.18\textwidth}XX@{}}
\toprule
Object & Mathematical role & Scope of the interpretation \\
\midrule
\(p_i,C_i\) & Categorical prediction and one-hot covariance & Motivate a possible scalar ranking only; they are not part of the Gaussian model. \\
\(Y_i,V_i=\|Y_i\|^2\) & Locally dependent score field & Determine the lowest-score indices; \(\rhoV\) controls lower-tail clustering. \\
\(\Sigma_N\) & Gaussian gap-cost representation & Attaches dependence through \(\rhoX\); it is not derived from a categorical decoder. \\
\(S_{N,K}\) & Indices of the \(K\) smallest \(V_i\) & Output of the single static selection step. \\
\(X^{(N)}\mid\Sigma_N\) & Auxiliary Gaussian vector & Defines the conditional Gaussian total-correlation cost on the selected set. \\
\bottomrule
\end{tabularx}
\end{table}

For a deterministic set \(S=\{i_1<\cdots<i_k\}\), write
\(X_S=(X_{i_1},\ldots,X_{i_k})\).  The model's conditional Gaussian total
correlation is
\begin{equation}\label{eq:intro-tc-definition}
    \TC_{\Sigma_N}(S)
    :=
    D_{\mathrm{KL}}\!\left(
      \mathcal L(X_S\mid\Sigma_N)
      \,\middle\|\,
      \bigotimes_{i\in S}\mathcal L(X_i\mid\Sigma_N)
    \right).
\end{equation}
For random \(S_{N,K}\), this is a random variable on the outer probability
space generated by \((Y_i)\).  All convergence in probability statements refer
to this outer randomness.

Once \(S\) is fixed, standardization removes the diagonal values \(V_i\) from
this Gaussian total correlation.  Consequently, a smaller score does not by
itself reduce the Gaussian dependence cost: score magnitudes matter only through
the ranking and the resulting spacings of the selected indices.  Stripped to
its probabilistic core, the paper studies rare minima of a short-range dependent
score field together with a decreasing gap-dependent cost. 

The score field is analytically tractable because \(V_i\sim\chi_m^2\), and very
small scores are dependent Gaussian small-ball events.  If a threshold keeps a
marginal fraction \(q\), the expected number of selected pairs within distance
\(L\) is of order \(NLq^2\).  With \(q\approx K/N\), this gives the collision
scale \(LK^2/N\).  The first main theorem converts spatial separation into an
upper bound on the prescribed Gaussian total-correlation cost.

\Needspace{16\baselineskip}
\begin{theorem}[First main result, readable form]\label{thm:main-readable}
Within the one-step random-field model, for every polynomial budget
\[
    K_N=\lfloor N^\alpha\rfloor,
    \qquad 0<\alpha<\frac12,
\]
the conditional Gaussian factorization cost on the selected indices vanishes:
\begin{equation}\label{eq:readable-main-conclusion}
    \TC_{\Sigma_N}(S_{N,K_N})\longrightarrow0
    \qquad\text{in probability.}
\end{equation}
More generally, the same conclusion holds whenever there is an integer distance
scale \(L_N\ge1\) such that
\begin{equation}\label{eq:readable-main-conditions}
    \frac{L_NK_N^2}{N}\longrightarrow0,
    \qquad
    K_N|\rhoX|^{2(L_N+1)}\longrightarrow0.
\end{equation}
The fully quantified statement is Theorem~\ref{thm:vanishing-tc} in
Section~\ref{sec:model}; its proof is given in
Appendix~\ref{app:first-main-proof}.
\end{theorem}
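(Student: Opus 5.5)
The plan is to reduce the total correlation to a deterministic function of the gaps between selected indices, and then show that with high probability all gaps exceed \(L_N\). First, I standardize. For a fixed set \(S=\{i_1<\cdots<i_k\}\), the correlation matrix of \(X_S\) given \(\Sigma_N\) is \(R_S=(\rhoX^{|i_a-i_b|})_{a,b}\), since the factors \(\sqrt{V_iV_j}\) cancel. Hence
\[
\TC_{\Sigma_N}(S)=-\tfrac12\log\det R_S ,
\]
provided all \(V_i>0\), which holds almost surely. Next, \(R_S\) is the covariance of a stationary AR(1) process \(Z\) with parameter \(\rhoX\) observed at times \(i_1,\dots,i_k\). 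By the Markov property, the chain rule for the determinant gives
\[
\det R_S=\prod_{a=2}^{k}\bigl(1-\rhoX^{2g_a}\bigr),\qquad g_a=i_a-i_{a-1}.
\]
Therefore
\[
\TC_{\Sigma_N}(S)=-\tfrac12\sum_{a}\log(1-\rhoX^{2g_a}).
\]
On the event \(E_N=\{\text{all gaps of }S_{N,K_N}\text{ exceed }L_N\}\), and using \(-\log(1-x)\le x/(1-x)\), this yields
\[
\TC\le \frac{K_N\,|\rhoX|^{2(L_N+1)}}{2\bigl(1-\rhoX^{2}\bigr)},
\]
which tends to zero by the second condition in \eqref{eq:readable-main-conditions}. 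So it remains to show \(\bP(E_N^c)\to0\).

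To control collisions, I would replace the rank-based set by a threshold set. Choose \(t_N\) with \(\bP(V_1\le t_N)=q_N:=2K_N/N\). The count \(\#\{i\le N: V_i\le t_N\}\) has mean \(2K_N\). Because \((V_i)\) is a function of a geometrically mixing AR(1) field, its variance is \(O(K_N)\), using the covariance bounds below. Chebyshev then gives that, with probability tending to one, this count is at least \(K_N\), and hence \(S_{N,K_N}\subseteq\{i:V_i\le t_N\}\). On that event, a collision among selected indices implies a pair \(i<j\le N\) with \(j-i\le L_N\) and \(V_i,V_j\le t_N\). The expected number of such pairs is at most
\[
N\sum_{d=1}^{L_N}\bP(V_0\le t_N,\;V_d\le t_N).
\]

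The main obstacle is the joint small-ball estimate \(\bP(V_0\le t,\,V_d\le t)\le C\,q(t)^2\) uniformly in \(d\ge1\) for small \(t\). Here \(q(t)\asymp t^{m/2}\). The plan is to write \((Y_0,Y_d)\) as a nondegenerate Gaussian in \(\R^{2m}\) whose joint density is bounded by \((2\pi)^{-m}(1-\rhoV^{2d})^{-m/2}\le (2\pi)^{-m}(1-\rhoV^{2})^{-m/2}\). Integrating this bound over the product of two balls of radius \(\sqrt t\) gives \(C_m t^{m}\asymp q(t)^2\), with a constant independent of \(d\). The same density bound controls the variance term in the threshold step. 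The expected number of close pairs is then \(O(NL_Nq_N^2)=O(L_NK_N^2/N)\to0\), so Markov's inequality gives \(\bP(E_N^c)\to0\).

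For the polynomial case \(K_N=\lfloor N^\alpha\rfloor\) with \(\alpha<1/2\), I take \(L_N=\lceil c\log N\rceil\) with \(c\) large enough that \(N^\alpha|\rhoX|^{2c\log N}\to0\). Then \(L_NK_N^2/N\le C\log N\cdot N^{2\alpha-1}\to0\), so both conditions in \eqref{eq:readable-main-conditions} hold and the general statement applies.
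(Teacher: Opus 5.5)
Your architecture matches the paper's: closed-form $\TC$ via the AR(1) determinant, a separation event, enlargement of the lowest-$K$ set to a threshold set, and Markov's inequality on threshold collisions. However, the step ``Chebyshev then gives that, with probability tending to one, this count is at least $K_N$'' fails for the general statement. With $q_N=2K_N/N$ the count $M$ has mean only $2K_N$. So even granting $\Var(M)=O(K_N)$, you get $\bP(M<K_N)\le\Var(M)/K_N^2=O(1/K_N)$, which vanishes only if $K_N\to\infty$. The general hypotheses allow bounded budgets: $K_N=2$ and $L_N=\lceil\log N\rceil$ satisfy both conditions. Then $\bE M=4$, $M$ is asymptotically Poisson, and $\bP(M<2)$ stays bounded away from zero, so your argument does not give $\bP(E_N^c)\to0$. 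The missing idea is to inflate the threshold density by a factor $\omega_N\to\infty$ with $\omega_N^2L_NK_N^2/N\to0$. The paper takes $q_N=(K_N/N)\alpha_N^{-1/4}$ with $\alpha_N=L_NK_N^2/N$. Then $Nq_N=K_N\alpha_N^{-1/4}\to\infty$ and $K_N/(Nq_N)=\alpha_N^{1/4}\to0$, so Chebyshev works for every $K_N\ge1$. Meanwhile the expected number of close threshold pairs is still $O(NL_Nq_N^2)=O(\alpha_N^{1/2})\to0$. The polynomial case escapes this problem because $\lfloor N^\alpha\rfloor\to\infty$.

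Second, the variance bound, which the polynomial case also needs, does not follow from the two-point estimate you plan to prove. A bound $p_d(q)\le Cq^2$ with a constant $C>1$ independent of $d$ only gives $\Cov(I_i,I_j)\le(C-1)q^2$ for each of about $N^2/2$ pairs. Hence $\Var(M)\le Nq+(C-1)N^2q^2$, which is of order $K_N^2$ and makes Chebyshev useless. What is needed is that the excess over independence be summable in the distance, uniformly for small $q$. The paper (Lemma~\ref{lem:uniform-two-point}) bounds the Radon--Nikodym derivative of $(Y_1,Y_{1+h})$ with respect to the product law on the small balls by $1+a_h=(1-r_h^2)^{-m/2}\exp\{|r_h|u_0/(1-r_h^2)\}$, where $r_h=\rhoV^h$ and $\sum_h a_h<\infty$; this gives $\Var(M)\le C_{\mathrm{var}}Nq$. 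Your own density bound suffices if you keep its $d$-dependence. Since $(2\pi)^{-m}v_m^2t^m\le e^{t}q(t)^2$, the covariance at distance $d$ is at most $\bigl[(1-\rhoV^{2d})^{-m/2}e^{t_N}-1\bigr]q_N^2$. Hence $\Var(M)=O(Nq_N)+O(N^2q_N^2t_N)=o\bigl((Nq_N)^2\bigr)$ whenever $Nq_N\to\infty$. That is enough for Chebyshev, though it is not the $O(K_N)$ you assert. Alternatively, the Gaussian maximal-correlation inequality $|\Cov(f(Y_i),g(Y_{i+d}))|\le|\rhoV|^d\sqrt{\Var f(Y_i)\,\Var g(Y_{i+d})}$ gives $\Var(M)=O(Nq_N)$ directly.
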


Theorem~\ref{thm:main-readable} states that a sufficiently conservative lowest-score set can
grow with \(N\) without accumulating a non-negligible value of the prescribed
Gaussian dependence cost.

The second result describes the model's critical collision scale.  Define
\begin{equation}\label{eq:intro-critical-constants}
    \vartheta_{\rhoV,m}:=(1-\rhoV^2)^{-m/2},
    \qquad
    c_{\rhoX}:=-\frac12\log(1-\rhoX^2)>0.
\end{equation}

\Needspace{13\baselineskip}
\begin{theorem}[Second main result, readable form]
\label{thm:critical-readable}
Within the same one-step model, if
\[
    \frac{K_N}{\sqrt N}\longrightarrow\lambda\in(0,\infty),
\]
then the conditional Gaussian total-correlation cost does not vanish.  More
precisely,
\begin{equation}\label{eq:intro-critical-obstruction}
    \liminf_{N\to\infty}
    \bP\!\left(
      \TC_{\Sigma_N}(S_{N,K_N})\ge c_{\rhoX}
    \right)
    \ge
    1-\exp\{-\vartheta_{\rhoV,m}\lambda^2\}>0,
\end{equation}
and
\begin{equation}\label{eq:intro-critical-expectation}
    \liminf_{N\to\infty}
    \bE\!\left[\TC_{\Sigma_N}(S_{N,K_N})\right]
    \ge
    c_{\rhoX}\vartheta_{\rhoV,m}\lambda^2>0.
\end{equation}
The fully quantified result is Theorem~\ref{thm:critical-tc} in
Section~\ref{sec:critical}.
\end{theorem}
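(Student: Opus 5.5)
The plan is to reduce the total-correlation cost to a count of adjacent selected pairs and then run a Poisson approximation for that count. After standardization, $\TC_{\Sigma_N}(S)=-\tfrac12\log\det R_S$, where $R_S$ is the correlation matrix $(\rhoX^{|i-j|})_{i,j\in S}$. This uses only the reduction noted after \eqref{eq:intro-tc-definition}, that the $V_i$ cancel. The matrix $R_S$ is the covariance of a stationary AR(1) process with unit variance sampled at the indices $S=\{i_1<\cdots<i_k\}$, and that subsequence is Markov. The chain rule for Gaussian entropy therefore gives
\begin{equation*}
\TC_{\Sigma_N}(S)=\sum_{r=1}^{k-1}-\tfrac12\log\bigl(1-\rhoX^{2(i_{r+1}-i_r)}\bigr).
\end{equation*}
Each summand is nonnegative, and a summand with gap $1$ equals exactly $c_{\rhoX}$. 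It follows that
\begin{equation*}
\TC_{\Sigma_N}(S)\ge c_{\rhoX}\,M_N, \qquad M_N:=\#\{i: i,i+1\in S_{N,K_N}\}.
\end{equation*}
So it is enough to prove $\liminf\bP(M_N\ge1)\ge 1-e^{-\vartheta\lambda^2}$ and $\liminf \bE M_N\ge\vartheta\lambda^2$.

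Next I will replace the random selection by a deterministic threshold. Let $F$ be the $\chi^2_m$ cdf. Choose $t_N$ with $NF(t_N)=K_N(1-\varepsilon)$ and define $A_i=\{V_i\le t_N\}$. The count $\#\{i\le N:A_i\}$ has mean $K_N(1-\varepsilon)\to\infty$. Its variance is $O(K_N)$, because the covariances of the indicators decay geometrically in $|i-j|$ (the field $Y$ is $\rho$-mixing). Chebyshev then shows that with probability tending to one at most $K_N$ indices lie below $t_N$. On that event every index below $t_N$ is selected, so $M_N\ge M_N':=\#\{i<N: A_i\cap A_{i+1}\}$.

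The key computation is the small-ball asymptotics for an adjacent pair. The vector $(Y_i,Y_{i+1})$ is Gaussian in $\R^{2m}$ with density at the origin equal to $(2\pi)^{-m}(1-\rhoV^2)^{-m/2}$. Hence
\begin{equation*}
\bP(A_i\cap A_{i+1})\sim (1-\rhoV^2)^{-m/2}\,\bP(A_i)^2 = \vartheta_{\rhoV,m}\,q_N^2, \qquad q_N=F(t_N)\sim (1-\varepsilon)K_N/N .
\end{equation*}
This gives $\bE M_N'\to\vartheta(1-\varepsilon)^2\lambda^2$. The expectation bound then follows from $\bE M_N\ge \bE[M_N'\mathbf 1_{G_N}]$, where $G_N$ is the good event. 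Since $M'_N$ is bounded in $L^2$ (see below), $\bE[M_N'\mathbf 1_{G_N^c}]\to0$. Letting $\varepsilon\downarrow0$ finishes the expectation bound.

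The main obstacle is the Poisson limit $M_N'\Rightarrow\mathrm{Poisson}(\vartheta\lambda^2)$, which gives $\bP(M_N'\ge1)\to1-e^{-\vartheta\lambda^2}$ after $\varepsilon\to0$. I will use the Chen–Stein bound in the local-dependence form of Arratia–Goldstein–Gordon with neighbourhood $B_i=\{j:|j-i|\le L_N\}$ and $L_N=C\log N$. The error has three terms.

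\textbf{$b_1$.} This term is $\sum_i\sum_{j\in B_i}p_ip_j=O(NL_Nq^4)\to0$.

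\textbf{$b_2$.} This term sums the joint probabilities $\bP(A_i A_{i+1}A_jA_{j+1})$ over $i\ne j$ with $|i-j|\le L_N$. For $i,j$ overlapping in one site it is a triple small-ball event of order $q^3$, contributing $O(Nq^3)\to0$. For a separation $d\ge2$ the bound is $q^4$ times a determinant factor. That factor is bounded uniformly in $d$, because the Gaussian density at the origin of $(Y_i,Y_{i+1},Y_{i+d},Y_{i+d+1})$ stays bounded as $d$ grows. The contribution is therefore $O(NL_Nq^4)\to0$.

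\textbf{$b_3$.} For events separated by more than $L_N$, the conditioning effect decays like $|\rhoV|^{L_N}$. I will control this through the density ratio of the Gaussian vector, giving $b_3\le N\cdot O(|\rhoV|^{L_N})\to0$ for $C$ large.

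The same $b_2$ estimate also shows $\bE M_N'^2=O(1)$, which is the $L^2$ bound used in the expectation step.
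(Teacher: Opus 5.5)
Your proof is correct. Its first step, $\TC_{\Sigma_N}(S)\ge c_{\rhoX}A(S)$ from the exact AR(1) determinant, is the paper's, but the probabilistic part takes a different route in three respects.

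(i) The paper proves the full limit $A(S_{N,K_N})\Rightarrow\operatorname{Poisson}(\vartheta_{\rhoV,m}\lambda^2)$ (Theorem~\ref{thm:exact-critical-poisson}). It does this through the two-sided sandwich $T_N(q_N^-)\subseteq S_{N,K_N}\subseteq T_N(q_N^+)$. You notice that only lower bounds are needed and use the lower threshold alone. This is cheaper, but it does not identify the limit law of $A(S_{N,K_N})$ itself.

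(ii) For the threshold count, the paper uses factorial moments (Proposition~\ref{prop:threshold-poisson}). Well-separated $r$-tuples are decoupled by the density-ratio Lemma~\ref{lem:separated-blocks}, clustered tuples are bounded by finite-set small-ball bounds, and no conditioning appears anywhere. Your Chen--Stein bound reuses the same small-ball estimates for $b_1,b_2$, but the decoupling moves into the conditional term $b_3$. That term needs the two-sided Markov property of $(Y_i)$ and integration over the unbounded boundary values $(Y_{i-L_N},Y_{i+1+L_N})$. For example, the total-variation distance between the conditional and unconditional laws of $(Y_i,Y_{i+1})$ has mean $O(|\rhoV|^{L_N})$, which is why you need $C>1/\log(1/|\rhoV|)$. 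In return you get an explicit rate.

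(iii) The paper derives the expectation bound from weak convergence by truncating at $M$. Yours needs only $\bE A_N(q_N^-)\to\vartheta_{\rhoV,m}(1-\varepsilon)^2\lambda^2$, a uniform $L^2$ bound, and Cauchy--Schwarz on $G_N^c$, so it does not use the Poisson limit at all.

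Some details to keep straight:
\begin{itemize}
\item The $L^2$ bound needs the four-site estimate $Cq_N^4$ at all separations $d\ge2$, not just $d\le L_N$. Your uniformity in $d$ supplies this and gives $O(N^2q_N^4)=O(1)$, so it is not the ``$b_2$ estimate'' alone.
\item The variance of the threshold count must be $O(Nq_N)$, not merely $O(N)$. Geometric decay of covariances alone would make Chebyshev fail at $K_N\asymp\sqrt N$. It is the $\rho$-mixing inequality $|\Cov(I_i,I_j)|\le|\rhoV|^{|i-j|}\Var(I_i)$ that supplies the factor $q_N$; the paper obtains it from $\Cov\le a_hq^2$.
\item $M_N'$ converges to $\operatorname{Poisson}(\vartheta_{\rhoV,m}(1-\varepsilon)^2\lambda^2)$, not $\operatorname{Poisson}(\vartheta_{\rhoV,m}\lambda^2)$. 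Let $\varepsilon\downarrow0$ only after taking $\liminf_N$.
\end{itemize}
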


The exact square-root limit is proved for adjacent selected pairs; it yields the
one-sided lower bounds above because each adjacent pair contributes a fixed
positive amount to the Gaussian gap cost.

\paragraph{Contributions.}
\begin{enumerate}
    \item We formulate a transparent one-step stochastic-geometry model
    motivated by confidence-based unmasking, while explicitly separating it
    from a full masked-diffusion process.
    \item For exact lowest-score selection, we prove a conservative regime in
    which the prescribed conditional Gaussian total-correlation cost vanishes.
    \item At the square-root collision scale, we prove one-sided probability
    and expectation lower bounds showing that this Gaussian cost is
    non-vanishing.
    \item We identify the auxiliary geometry of rare selected minima and verify
    the model's asymptotic predictions in synthetic simulations of the same
    random field.
\end{enumerate}

Section~\ref{sec:model} formalizes the one-step random-covariance model and
states the first main theorem.  Section~\ref{sec:applications} records internal
model implications and simulations.  Section~\ref{sec:subcritical} states the
auxiliary sparsification theorem and the proof roadmap.  Section~\ref{sec:critical}
states the critical lower-bound theorem.  Appendices~\ref{app:first-main-proof}
and \ref{app:critical-poisson-proof} contain the technical proofs.  The final
sections discuss the relation to masked-diffusion scheduling, the model's scope,
and its limitations.

\section{Stylized one-step random-covariance model and the first main theorem}\label{sec:model}

For \(N\in\mathbb N\), write \([N]=\{1,\ldots,N\}\), and let \(I_m\)
denote the \(m\times m\) identity matrix.  The symbols \(m\), \(\rhoV\),
\(\rhoX\), \((Y_i)\), \((V_i)\), \(\Sigma_N\), \(S_{N,K}\), and
\(\TC_{\Sigma_N}(S)\) retain the meanings introduced in
Section~\ref{sec:introduction}.  All asymptotic statements are taken as
\(N\to\infty\), with \(m\), \(\rhoV\), and \(\rhoX\) fixed.  This section supplies the formal probability-space construction and
introduces the auxiliary notation used in the proofs.  The construction is a
static one-step model; it is not a reverse-time transition kernel and does not
specify how the score field would evolve after selected positions are updated.

\subsection{Latent variance field and covariance construction}

Realize the stationary field introduced in Section~\ref{sec:introduction}
as the \(\mathbb R^m\)-valued Gaussian AR(1) chain
\begin{equation}\label{eq:uncertainty-field-ar1}
    Y_i=\rhoV Y_{i-1}+\sqrt{1-\rhoV^2}\,\xi_i,
    \qquad i\ge2,
\end{equation}
where \(Y_1\sim N(0,I_m)\), the innovations are independent
\(N(0,I_m)\), and \(Y_1\) is independent of them.  The variance profile introduced in \eqref{eq:intro-variance-field} is
therefore \(V_i=\|Y_i\|^2\); it is a covariance diagonal, not an estimator
computed from repeated samples.

For later matrix calculations, introduce the auxiliary notation
\begin{equation}\label{eq:random-covariance-definition}
    D_{V,N}=\operatorname{diag}(V_1,\ldots,V_N),
    \qquad
    R_{\rhoX,N}=(\rhoX^{|i-j|})_{i,j=1}^N.
\end{equation}
The entrywise covariance specified in
\eqref{eq:intro-random-covariance} then has the factorization
\begin{equation}\label{eq:Sigma-random-environment}
    \Sigma_N=D_{V,N}^{1/2}R_{\rhoX,N}D_{V,N}^{1/2}.
\end{equation}

\begin{proposition}[Validity of the random-covariance construction]\label{prop:covariance-environment}
Almost surely, \(V_i>0\) for every \(i\), \(\Sigma_N\) is positive definite,
and
\begin{equation}\label{eq:Sigma-diagonal}
    (\Sigma_N)_{ii}=V_i.
\end{equation}
Moreover, on an extension of the probability space, let \((Z_i)_{i\ge1}\) be
a scalar stationary Gaussian AR(1) chain with correlation parameter \(\rhoX\),
independent of \((Y_i)_{i\ge1}\), and set
\begin{equation}\label{eq:explicit-conditional-process}
    X_i=\sqrt{V_i}\,Z_i,
    \qquad i\ge1.
\end{equation}
Then, for every \(N\),
\begin{equation}\label{eq:conditional-process}
    X^{(N)}=(X_1,\ldots,X_N)\mid\Sigma_N
    \sim N(0,\Sigma_N).
\end{equation}
\end{proposition}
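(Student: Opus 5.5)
The proof has three parts: almost-sure positivity of the scores, positive definiteness of \(\Sigma_N\) through the factorization \eqref{eq:Sigma-random-environment}, and identification of the conditional law of \(X^{(N)}\).

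\emph{Positivity of the scores.} By stationarity of the AR(1) recursion \eqref{eq:uncertainty-field-ar1}, each \(Y_i\sim N(0,I_m)\), so \(V_i=\|Y_i\|^2\sim\chi^2_m\). Since \(m\ge1\), \(\bP(V_i=0)=\bP(Y_i=0)=0\). A countable union over \(i\ge1\) shows that \(V_i>0\) for all \(i\) almost surely.

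\emph{Positive definiteness.} \(R_{\rhoX,N}\) is the covariance matrix of \((Z_1,\ldots,Z_N)\) for a stationary scalar AR(1) chain with parameter \(\rhoX\). To see that it is positive definite, I will cite the explicit inverse (the tridiagonal Kac--Murdock--Szeg\H{o} inverse). Alternatively, I will observe that \(Z_1,\ \sqrt{1-\rhoX^2}\,\xi'_2,\ldots,\sqrt{1-\rhoX^2}\,\xi'_N\) are independent nondegenerate Gaussians, and that they are obtained from \((Z_1,\ldots,Z_N)\) by an invertible lower-triangular map. Hence \(\det R_{\rhoX,N}=(1-\rhoX^2)^{N-1}>0\). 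This is valid because \(|\rhoX|<1\).

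On the event \(\{V_i>0\ \forall i\}\), the matrix \(D_{V,N}^{1/2}\) is invertible. Therefore \(\Sigma_N=D^{1/2}RD^{1/2}\) is congruent to \(R\) and hence positive definite. The diagonal identity \eqref{eq:Sigma-diagonal} follows immediately from \(R_{ii}=\rhoX^0=1\).

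\emph{Conditional law.} Let \(Z\) be independent of \((Y_i)\). Conditionally on \(\mathcal{G}=\sigma(Y_i:i\ge1)\), the vector \(X^{(N)}=D_{V,N}^{1/2}Z^{(N)}\) is a fixed linear image of \(Z^{(N)}\sim N(0,R_{\rhoX,N})\). Its conditional law given \(\mathcal G\) is therefore \(N(0,D^{1/2}RD^{1/2})=N(0,\Sigma_N)\). I will make this rigorous with the freezing lemma for independent variables, applied through conditional characteristic functions: \(\bE[e^{i\langle t,X^{(N)}\rangle}\mid\mathcal G]=\exp(-\tfrac12 t^\top\Sigma_N t)\).

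This conditional characteristic function is \(\sigma(\Sigma_N)\)-measurable. Since \(\sigma(\Sigma_N)\subseteq\mathcal G\), the tower property transfers the identity to conditioning on \(\Sigma_N\). This proves \eqref{eq:conditional-process}.

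The only point needing care is this last step. Conditioning on \(\Sigma_N\) is coarser than conditioning on the \(Y\)'s (since \(\Sigma_N\) depends only on the \(V_i\)), so one must verify that the conditional law given \(\mathcal G\) depends on \(\mathcal G\) only through \(\Sigma_N\). The characteristic function argument above handles exactly this. Everything else is routine linear algebra.
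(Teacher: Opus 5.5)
Your proposal is correct and follows the paper's proof step for step: countable-union positivity of the \(V_i\), positive definiteness of \(\Sigma_N\) by congruence with \(R_{\rhoX,N}\), and independence of \((Z_i)\) from the score field to identify the conditional law. Your conditional characteristic-function (freezing-lemma) argument is a more careful version of the paper's one-line conditional-covariance computation. The detour through \(\sigma(Y_i:i\ge1)\) is harmless but unnecessary: \(D_{V,N}\) is already \(\sigma(\Sigma_N)\)-measurable and \((Z_i)\) is independent of \(\sigma(\Sigma_N)\), so you can condition on \(\Sigma_N\) directly.
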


\begin{proof}
Each \(Y_i\) has a nondegenerate Gaussian density on \(\mathbb R^m\), so
\(\bP(Y_i=0)=0\).  A countable union shows that almost surely \(V_i>0\) for
all \(i\).  The conditional-correlation matrix \(R_{\rhoX,N}\) is positive definite
for \(|\rhoX|<1\).  On the event that all \(V_i>0\), \(D_{V,N}^{1/2}\) is
invertible, and the congruence
\(D_{V,N}^{1/2}R_{\rhoX,N}D_{V,N}^{1/2}\) is positive definite.  Its diagonal is
\(V_i(R_{\rhoX,N})_{ii}=V_i\).

Because \((Z_i)\) is independent of the variance field, its conditional law
given \(\Sigma_N\) is still centered Gaussian with covariance
\(R_{\rhoX,N}\).  Equation~\eqref{eq:explicit-conditional-process} therefore
gives
\[
    \operatorname{Cov}(X^{(N)}\mid\Sigma_N)
    =D_{V,N}^{1/2}R_{\rhoX,N}D_{V,N}^{1/2}
    =\Sigma_N,
\]
which proves \eqref{eq:conditional-process}.
\end{proof}

The construction is static.  First sample the score field and hence
\(\Sigma_N\); next read its diagonal and determine the selected set; only then,
if desired, use the independent chain \((Z_i)\) to realize the auxiliary
Gaussian vector.  The selection results depend on the random covariance but not
on a realization of \((X_i)\).  No claim is made that the pair
\((V_i,R_{
hoX,N})\) is induced by a common categorical decoder.

\begin{remark}[Rank-preserving calibration invariance]\label{rem:monotone-calibration}
Let \(g:(0,\infty)\to(0,\infty)\) be strictly increasing and define
\[
    \widetilde V_i=g(V_i),
    \qquad
    D_{\widetilde V,N}=\operatorname{diag}(\widetilde V_1,\ldots,\widetilde V_N),
\]
and define
\[
    \widetilde\Sigma_N
    =D_{\widetilde V,N}^{1/2}R_{\rhoX,N}D_{\widetilde V,N}^{1/2}.
\]
Then the exact lowest-\(K\) set is unchanged, because \(g\) preserves the score
ordering, and standardization removes the transformed diagonal scale from every
retained conditional Gaussian subvector.  Consequently all spacing, Poisson, and total-
correlation conclusions remain valid.  In particular, a bounded increasing
calibration may be used when matching the latent chi-square score to a bounded
categorical uncertainty measure.
\end{remark}

\subsection{Chi-square small-ball structure}

\begin{proposition}[Dependent chi-square variance field]\label{prop:chi-square-field}
The process \((Y_i)\) is centered Gaussian with
\begin{equation}\label{eq:Y-covariance}
    \Cov(Y_i,Y_j)=\rhoV^{|i-j|}I_m.
\end{equation}
Each diagonal variance satisfies \(V_i\sim\chi_m^2\), and
\begin{equation}\label{eq:D-covariance}
    \Cov(V_i,V_j)=2m\rhoV^{2|i-j|}.
\end{equation}
\end{proposition}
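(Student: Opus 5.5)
The plan is to get all three claims directly from the AR(1) recursion \eqref{eq:uncertainty-field-ar1} and the isotropy of Gaussian vectors.

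\textbf{Covariance of \(Y\).} Unrolling the recursion gives, for \(j\ge i\),
\[
    Y_j=\rhoV^{\,j-i}Y_i+\sqrt{1-\rhoV^2}\sum_{k=i+1}^{j}\rhoV^{\,j-k}\xi_k .
\]
Here \(Y_i\) depends only on \(Y_1,\xi_2,\ldots,\xi_i\), so it is independent of \(\xi_{i+1},\ldots,\xi_j\). Every \(Y_j\) is a linear image of the jointly Gaussian family \((Y_1,\xi_2,\xi_3,\ldots)\), so the process is centered Gaussian.

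Next, stationarity of the marginals follows by induction. If \(\Cov(Y_{i-1})=I_m\), then
\[
    \Cov(Y_i)=\rhoV^2 I_m+(1-\rhoV^2)I_m=I_m .
\]
Combining this with the unrolled expression gives
\[
    \Cov(Y_i,Y_j)=\rhoV^{\,j-i}\Cov(Y_i)=\rhoV^{|i-j|}I_m,
\]
which is \eqref{eq:Y-covariance}.

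\textbf{Marginal law of \(V_i\).} Since \(Y_i\sim N(0,I_m)\), the value \(V_i=\|Y_i\|^2\) is a sum of \(m\) squared independent standard normals. Hence \(V_i\sim\chi_m^2\).

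\textbf{Covariance of \(V\).} Write \(\rho=\rhoV^{|i-j|}\). The covariance structure is diagonal, so the coordinate pairs \((Y_{i,\ell},Y_{j,\ell})\), \(\ell=1,\ldots,m\), are independent. Each pair is a bivariate standard normal with correlation \(\rho\). Therefore
\[
    \Cov(V_i,V_j)=\sum_{\ell=1}^m\Cov\bigl(Y_{i,\ell}^2,Y_{j,\ell}^2\bigr).
\]
Isserlis' theorem gives \(\bE[A^2B^2]=1+2\rho^2\) for a standard bivariate pair \((A,B)\) with correlation \(\rho\). Equivalently, write \(B=\rho A+\sqrt{1-\rho^2}\,W\) with \(W\) independent of \(A\), and use \(\bE A^4=3\). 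Either way,
\[
    \Cov(A^2,B^2)=2\rho^2 .
\]
Summing over the \(m\) coordinates yields \(2m\rhoV^{2|i-j|}\), which is \eqref{eq:D-covariance}.

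None of these steps is a real obstacle. The only points needing care are stating the independence of \(Y_i\) from the later innovations, and the independence across coordinates that reduces the vector problem to scalar fourth-moment computations.
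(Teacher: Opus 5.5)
Your proposal is correct and follows essentially the same route as the paper: unroll the AR(1) recursion, use independence of \(Y_i\) from the later innovations, and apply Isserlis' formula. The differences are cosmetic. You check stationarity of the marginals by induction, where the paper simply cites stationarity. You also first split into independent coordinate pairs, while the paper applies Isserlis directly to the double sum \(2\sum_{a,b}\Cov(Y_{i,a},Y_{j,b})^2=2m\rhoV^{2|i-j|}\).
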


\begin{proof}
Iterating \eqref{eq:uncertainty-field-ar1} gives
\[
    Y_j=\rhoV^{j-i}Y_i+
    \sqrt{1-\rhoV^2}\sum_{\ell=i+1}^j\rhoV^{j-\ell}\xi_\ell,
    \qquad j>i.
\]
The innovation sum is independent of \(Y_i\), which proves
\eqref{eq:Y-covariance}; stationarity gives \(Y_i\sim N(0,I_m)\), so
\(V_i=\|Y_i\|^2\sim\chi_m^2\).  Isserlis' formula yields
\[
    \Cov(V_i,V_j)
    =2\sum_{a,b=1}^m\Cov(Y_{i,a},Y_{j,b})^2
    =2m\rhoV^{2|i-j|}.
\]
\end{proof}

Let
\begin{equation}\label{eq:chi-cdf}
    F_m(u)=\bP(\chi_m^2\le u),
\end{equation}
and define
\begin{equation}\label{eq:quantile}
    u_m(q)=F_m^{-1}(q),
    \qquad q\in(0,1).
\end{equation}
As \(u\downarrow0\),
\begin{equation}\label{eq:chi-small-u}
    F_m(u)
    =
    \frac{u^{m/2}}{2^{m/2}\Gamma(m/2+1)}(1+O(u)).
\end{equation}

\subsection{Exact lowest-\texorpdfstring{$K$}{K} variance selection}

Recall that \(S_{N,K}\) denotes the exact lowest-variance set introduced in
Section~\ref{sec:introduction}.  This set is almost surely well-defined without
ties: for \(i\ne j\), the vector \((Y_i,Y_j)\) has a nondegenerate Gaussian
density and \(\|Y_i\|^2-\|Y_j\|^2\) is a nonzero polynomial, whose zero set
has Lebesgue measure zero.

\begin{definition}[Threshold set]\label{def:topk-threshold-sets}
For \(q\in(0,1)\), define
\begin{equation}\label{eq:threshold-set}
    T_N(q)=\{i\in[N]:V_i\le u_m(q)\}.
\end{equation}
Then
\begin{equation}\label{eq:threshold-mean}
    \bE|T_N(q)|=Nq.
\end{equation}
\end{definition}

For distance \(h\ge1\), set
\begin{equation}\label{eq:pair-prob}
    p_h(q)=\bP(V_1\le u_m(q),\ V_{1+h}\le u_m(q)).
\end{equation}

\begin{lemma}[Fixed-distance two-point small balls]\label{lem:fixed-distance-small-ball}
For every fixed \(h\ge1\),
\begin{equation}\label{eq:pair-asymptotic}
    p_h(q)
    =
    (1-\rhoV^{2h})^{-m/2}q^2(1+o(1)),
    \qquad q\downarrow0.
\end{equation}
\end{lemma}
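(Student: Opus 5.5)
We will compute \(p_h(q)\) directly from the joint Gaussian density of \((Y_1,Y_{1+h})\) and shrink both small balls simultaneously. By Proposition~\ref{prop:chi-square-field}, the pair \((Y_1,Y_{1+h})\in\R^{2m}\) is centered Gaussian with covariance
\[
\begin{pmatrix} I_m & r I_m\\ r I_m & I_m\end{pmatrix},\qquad r:=\rhoV^{h},\ |r|<1 .
\]
This covariance is nondegenerate with determinant \((1-r^2)^m\). The joint density is therefore
\[
f_h(y,y')=(2\pi)^{-m}(1-r^2)^{-m/2}\exp\!\Big\{-\frac{\|y\|^2-2r\langle y,y'\rangle+\|y'\|^2}{2(1-r^2)}\Big\},
\]
which is continuous at the origin with value \(f_h(0,0)=(2\pi)^{-m}(1-r^2)^{-m/2}\).

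Write \(u=u_m(q)\), so that \(u\downarrow0\) as \(q\downarrow0\). The event in \eqref{eq:pair-prob} is exactly \(\{Y_1\in B_{\sqrt u}\}\cap\{Y_{1+h}\in B_{\sqrt u}\}\), where \(B_\rho\subset\R^m\) is the Euclidean ball of radius \(\rho\). Hence
\[
p_h(q)=\int_{B_{\sqrt u}\times B_{\sqrt u}} f_h .
\]
On this domain the exponent is \(O(u)\), uniformly, since \(|\langle y,y'\rangle|\le u\). So \(f_h=f_h(0,0)(1+O(u))\) there, and
\[
p_h(q)=(2\pi)^{-m}(1-r^2)^{-m/2}\,|B_{\sqrt u}|^2\,(1+O(u)).
\]

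The same argument with one coordinate block gives the marginal \(q=F_m(u)\). The standard \(N(0,I_m)\) density is continuous at \(0\) with value \((2\pi)^{-m/2}\), so
\[
q=(2\pi)^{-m/2}|B_{\sqrt u}|(1+O(u)).
\]
This is consistent with \eqref{eq:chi-small-u}, since \(|B_{\sqrt u}|=\pi^{m/2}u^{m/2}/\Gamma(m/2+1)\).

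Squaring the marginal estimate gives \(q^2=(2\pi)^{-m}|B_{\sqrt u}|^2(1+O(u))\). Dividing,
\[
\frac{p_h(q)}{q^2}=(1-\rhoV^{2h})^{-m/2}(1+O(u)).
\]
From \eqref{eq:chi-small-u}, \(u\asymp q^{2/m}\to0\), so \(O(u)=o(1)\), which yields \eqref{eq:pair-asymptotic}.

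The only delicate step is the uniformity of the density approximation over the shrinking product of balls. It holds here because \(h\) is fixed, so \(|r|\le|\rhoV|<1\) is bounded away from \(1\). If \(h\) were allowed to grow, the constant \((1-r^2)^{-m/2}\) would tend to \(1\), but the same argument would still apply. The degenerate case \(r\to1\) never occurs, since \(h\ge1\).
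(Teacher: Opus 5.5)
Your proof is correct and follows the paper's own argument. Both evaluate the joint Gaussian density of $(Y_1,Y_{1+h})$ at the origin, approximate $p_h(q)$ by that value times $|B_{\sqrt u}|^2$, and divide by the one-block analogue of $q^2=F_m(u)^2$; your explicit $O(u)$ bound on the exponent simply makes the paper's appeal to ``continuity at the origin'' quantitative.
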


\begin{proof}
Put \(r=\rhoV^h\) and \(u=u_m(q)\).  The joint density of
\((Y_1,Y_{1+h})\) at the origin is
\[
    (2\pi)^{-m}(1-r^2)^{-m/2}.
\]
Continuity at the origin gives
\[
    \bP(\|Y_1\|^2\le u,\|Y_{1+h}\|^2\le u)
    =
    (2\pi)^{-m}(1-r^2)^{-m/2}v_m^2u^m(1+o(1)),
\]
where \(v_m\) is the volume of the unit ball in \(\mathbb R^m\).  Similarly,
\[
    F_m(u)=(2\pi)^{-m/2}v_mu^{m/2}(1+o(1)).
\]
Dividing by \(F_m(u)^2=q^2\) proves the claim.
\end{proof}

\begin{definition}[Collisions and spacing]\label{def:pair-counts-model-section}
For \(S\subset[N]\) and \(L\ge1\), define
\begin{equation}\label{eq:collision-count}
    C_L(S)=
    \sum_{h=1}^{\min\{L,N-1\}}
    \sum_{i=1}^{N-h}
    \mathbf 1_{\{i\in S\}}\mathbf 1_{\{i+h\in S\}},
\end{equation}
\begin{equation}\label{eq:adjacent-count}
    A(S)=C_1(S),
\end{equation}
and
\begin{equation}\label{eq:min-spacing}
    \Delta(S)=\min_{i\ne j,\ i,j\in S}|i-j|,
\end{equation}
with \(\Delta(S)=\infty\) for \(|S|\le1\).  Then
\begin{equation}\label{eq:spacing-collision-equivalence}
    \Delta(S)>L
    \quad\Longleftrightarrow\quad
    C_L(S)=0.
\end{equation}
\end{definition}

\subsection{Conditional total correlation}

Recall the conditional total-correlation notation from
\eqref{eq:intro-tc-definition}.  For a deterministic set
\(S=\{i_1<\cdots<i_k\}\subset[N]\), standardization by the conditional
marginal standard deviations \(\sqrt{V_i}\) removes the random diagonal scale.
Hence the correlation matrix of \(X_S\mid\Sigma_N\) is
\begin{equation}\label{eq:retained-correlation}
    R_{\rhoX}(S)=(\rhoX^{|i_a-i_b|})_{a,b=1}^k.
\end{equation}
The Gaussian KL formula therefore gives the closed form
\begin{equation}\label{eq:total-correlation}
    \TC_{\Sigma_N}(S)
    =
    -\frac12\log\det R_{\rhoX}(S).
\end{equation}
This identity applies pointwise to every realized covariance and every
\(\Sigma_N\)-measurable selected set.

\begin{lemma}[Exact AR(1) determinant]\label{lem:exact-ar1-determinant}
If \(S=\{i_1<\cdots<i_k\}\), then
\begin{equation}\label{eq:exact-ar1-determinant}
    \det R_{\rhoX}(S)
    =
    \prod_{a=1}^{k-1}
    \left(1-\rhoX^{2(i_{a+1}-i_a)}\right).
\end{equation}
\end{lemma}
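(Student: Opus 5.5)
The plan is to exploit the Markov structure of the stationary AR(1) chain. Recall from Proposition~\ref{prop:covariance-environment} the scalar chain \((Z_i)\) with \(\Cov(Z_i,Z_j)=\rhoX^{|i-j|}\). Then \(R_{\rhoX}(S)\) is exactly the covariance matrix of the subvector \((Z_{i_1},\ldots,Z_{i_k})\). The determinant of a Gaussian covariance matrix factorizes along a chain of conditional variances:
\[
\det R_{\rhoX}(S)=\Var(Z_{i_1})\prod_{a=2}^{k}\Var\bigl(Z_{i_a}\mid Z_{i_1},\ldots,Z_{i_{a-1}}\bigr).
\]
This identity follows by induction from the Schur-complement formula \(\det\begin{pmatrix}A&b\\ b^\top&c\end{pmatrix}=\det A\,(c-b^\top A^{-1}b)\), or equivalently from the Cholesky/Gram–Schmidt decomposition.

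Next I evaluate each factor. By stationarity, \(\Var(Z_{i_1})=1\). Iterating the AR(1) recursion, as in the proof of Proposition~\ref{prop:chi-square-field}, gives for \(j>i\)
\[
Z_j=\rhoX^{\,j-i}Z_i+\eta,
\]
where \(\eta\) is a linear combination of innovations with indices in \((i,j]\). Hence \(\eta\) is independent of \((Z_\ell)_{\ell\le i}\), and its variance is \(1-\rhoX^{2(j-i)}\). Applying this with \(i=i_{a-1}\) and \(j=i_a\), and noting that \(Z_{i_1},\ldots,Z_{i_{a-1}}\) are all measurable with respect to \((Z_\ell)_{\ell\le i_{a-1}}\), the conditional law of \(Z_{i_a}\) given the earlier retained coordinates is Gaussian with mean \(\rhoX^{\,i_a-i_{a-1}}Z_{i_{a-1}}\) and variance \(1-\rhoX^{2(i_a-i_{a-1})}\). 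Multiplying these factors gives \eqref{eq:exact-ar1-determinant}.

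The only step requiring care is the Markov reduction, namely that conditioning on all earlier retained coordinates gives the same residual variance as conditioning on the immediately preceding one. This holds because the innovation \(\eta\) is independent of the entire past sigma-field, not merely of \(Z_{i_{a-1}}\).

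As a purely algebraic cross-check, one can verify the \(k=2\) case directly: \(\det=1-\rhoX^{2(i_2-i_1)}\). The general case then follows by Schur complement, using the fact that the last row of \(R_{\rhoX}(S)\) restricted to the earlier indices equals \(\rhoX^{\,i_k-i_{k-1}}\) times the row of \(i_{k-1}\).
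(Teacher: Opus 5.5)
Your proof is correct and follows the same route as the paper: you realize $R_{\rhoX}(S)$ as the covariance of the subsampled AR(1) chain $(Z_{i_1},\ldots,Z_{i_k})$ and factor its determinant as the unit first variance times the successive conditional variances $1-\rhoX^{2(i_{a+1}-i_a)}$. You additionally spell out the Schur-complement identity and the Markov reduction (conditioning on the whole past gives the same residual variance as conditioning on the preceding retained site), which the paper leaves implicit.
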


\begin{proof}
The subsampled stationary AR(1) chain with parameter \(\rhoX\) satisfies
\[
    Z_{i_{a+1}}
    =
    \rhoX^{i_{a+1}-i_a}Z_{i_a}
    +\sqrt{1-\rhoX^{2(i_{a+1}-i_a)}}\,\eta_a,
\]
with independent standard Gaussian innovations \(\eta_a\).  The covariance
determinant is the product of the first marginal variance, equal to one, and
the successive conditional variances.
\end{proof}

\begin{lemma}[Separated sets have small total correlation]\label{lem:tc-bound}
Let \(|S|=k\ge2\) and suppose \(\Delta(S)>L\).  Then
\begin{equation}\label{eq:tc-bound}
    0\le \TC_{\Sigma_N}(S)
    \le
    \frac{k-1}{2}\,
    \frac{|\rhoX|^{2(L+1)}}{1-|\rhoX|^{2(L+1)}}.
\end{equation}
\end{lemma}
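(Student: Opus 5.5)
The plan is to combine the closed form \eqref{eq:total-correlation} with the exact determinant of Lemma~\ref{lem:exact-ar1-determinant}. Writing \(S=\{i_1<\cdots<i_k\}\) and \(d_a=i_{a+1}-i_a\), these give
\[
    \TC_{\Sigma_N}(S)=-\frac12\sum_{a=1}^{k-1}\log\bigl(1-\rhoX^{2d_a}\bigr).
\]
Each factor satisfies \(0<1-\rhoX^{2d_a}\le1\), because \(0<|\rhoX|<1\) and \(d_a\ge1\). Hence every summand is nonnegative, which gives the lower bound \(\TC_{\Sigma_N}(S)\ge0\). The nonnegativity is also clear from the fact that a KL divergence is nonnegative.

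For the upper bound, I use the separation hypothesis. Since \(\Delta(S)>L\) and the gaps are integers, every consecutive gap satisfies \(d_a\ge L+1\). It follows that \(x_a:=\rhoX^{2d_a}=|\rhoX|^{2d_a}\le|\rhoX|^{2(L+1)}=:x_*<1\). Next I apply the elementary inequality \(-\log(1-x)\le x/(1-x)\) for \(x\in[0,1)\). It follows from \(\log y\ge1-1/y\) with \(y=1-x\), or equivalently from the series bound \(\sum_n x^n/n\le\sum_n x^n\). The map \(x\mapsto x/(1-x)\) is increasing on \([0,1)\), so each summand obeys \(-\tfrac12\log(1-x_a)\le\tfrac12\,x_*/(1-x_*)\). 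Summing over the \(k-1\) gaps gives \eqref{eq:tc-bound}.

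There is no real obstacle here. The only points needing care are that \(\rhoX\) may be negative, which is harmless because only even powers appear, and that the bound relies on consecutive gaps being at least \(L+1\). This holds because \(\Delta(S)\) is the minimum over all pairs, so in particular over consecutive ones.
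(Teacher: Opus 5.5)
Your proposal is correct and follows the paper's proof essentially verbatim: you write \(\TC_{\Sigma_N}(S)\) as a sum of gap terms via Lemma~\ref{lem:exact-ar1-determinant}, use \(\Delta(S)>L\) to get every consecutive gap at least \(L+1\), and then apply \(-\log(1-x)\le x/(1-x)\) to each of the \(k-1\) summands. You also make explicit two small points the paper leaves implicit: that \(x\mapsto x/(1-x)\) is increasing, and why each summand is nonnegative.
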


\begin{proof}
Write \(S=\{i_1<\cdots<i_k\}\) and
\(g_a=i_{a+1}-i_a\).  By Lemma~\ref{lem:exact-ar1-determinant},
\[
    \TC_{\Sigma_N}(S)
    =
    -\frac12\sum_{a=1}^{k-1}\log\bigl(1-\rhoX^{2g_a}\bigr).
\]
The spacing assumption gives \(g_a\ge L+1\), and hence
\(0\le \rhoX^{2g_a}\le |\rhoX|^{2(L+1)}<1\).  Using
\(-\log(1-x)\le x/(1-x)\) for \(0\le x<1\), each summand is at most
\[
    \frac12\,
    \frac{|\rhoX|^{2(L+1)}}{1-|\rhoX|^{2(L+1)}}.
\]
Summing over \(a=1,\ldots,k-1\) proves \eqref{eq:tc-bound}.
\end{proof}

\Needspace{16\baselineskip}
\begin{theorem}[First main theorem: subcritical vanishing of conditional total correlation]
\label{thm:vanishing-tc}
Let \((K_N)_{N\ge1}\) and \((L_N)_{N\ge1}\) be integer sequences satisfying
\[
    1\le K_N\le N,
    \qquad
    L_N\ge1.
\]
Assume
\begin{equation}\label{eq:subcritical-assumption-main}
    \frac{L_NK_N^2}{N}\longrightarrow0
\end{equation}
and
\begin{equation}\label{eq:decorrelation-assumption-section3}
    K_N|\rhoX|^{2(L_N+1)}\longrightarrow0.
\end{equation}
Then the exact lowest-variance set \(S_{N,K_N}\) satisfies
\begin{equation}\label{eq:tc-convergence-probability}
    \TC_{\Sigma_N}(S_{N,K_N})\longrightarrow0
    \qquad\text{in probability,}
\end{equation}
where probability is taken over the latent variance field \((Y_i)\), equivalently
over the realized random covariance \(\Sigma_N\).
\end{theorem}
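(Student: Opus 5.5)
The argument reduces to a spacing statement. By Lemma~\ref{lem:tc-bound} applied with \(k=K_N\) and \(L=L_N\), on the event \(\{\Delta(S_{N,K_N})>L_N\}\) we have
\[
    \TC_{\Sigma_N}(S_{N,K_N})
    \le \frac{K_N}{2}\,\frac{|\rhoX|^{2(L_N+1)}}{1-|\rhoX|^{2(L_N+1)}},
\]
which is deterministic and tends to zero by \eqref{eq:decorrelation-assumption-section3}, since \(|\rhoX|^{2(L_N+1)}\le K_N|\rhoX|^{2(L_N+1)}\to0\) keeps the denominator bounded below. The case \(K_N=1\) gives \(\TC=0\). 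It therefore suffices to prove \(\bP(C_{L_N}(S_{N,K_N})>0)\to0\), using \eqref{eq:spacing-collision-equivalence}.

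To handle the random rank selection, I will sandwich the exact lowest-\(K\) set inside a threshold set. I choose \(q_N=MK_N/N\) with a large constant \(M\), or more precisely \(q_N=\omega_NK_N/N\) with \(\omega_N\to\infty\) slowly so that \(\omega_N^2L_NK_N^2/N\to0\). On the event \(\{|T_N(q_N)|\ge K_N\}\) we have \(S_{N,K_N}\subset T_N(q_N)\), and hence \(C_{L_N}(S_{N,K_N})\le C_{L_N}(T_N(q_N))\). The probability of \(\{|T_N(q_N)|<K_N\}\) is handled by Chebyshev. The mean is \(Nq_N=\omega_NK_N\), and
\[
    \Var|T_N(q)| \le Nq + 2N\sum_{h\ge1}\bigl(p_h(q)-q^2\bigr)_+ .
\]
I expect this variance to be \(O(Nq)\). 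Pairs at distance \(h\) have \(p_h(q)\le C q^2(1-\rhoV^{2h})^{-m/2}\), and a uniform version of this bound is needed. Thus \(\Var=O(Nq+Nq^2\log(1/q))\), which is \(o((Nq)^2)\) provided \(Nq_N\to\infty\). That holds because \(\omega_N\to\infty\). If \(K_N\) stays bounded, I can still take \(Nq_N=\omega_NK_N\to\infty\).

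The main step is the first-moment bound
\[
    \bE\,C_{L_N}(T_N(q_N))\le N\sum_{h=1}^{L_N}p_h(q_N)\le C\,N L_N q_N^2 = C\,\omega_N^2 L_NK_N^2/N\to0,
\]
which requires \(p_h(q)\le Cq^2\) uniformly in \(h\ge1\) and small \(q\). Lemma~\ref{lem:fixed-distance-small-ball} gives this only for fixed \(h\), so uniformity is the obstacle.

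To get uniformity, I use the fact that the joint Gaussian density of \((Y_1,Y_{1+h})\) is bounded by \((2\pi)^{-m}(1-\rhoV^{2h})^{-m/2}\le(2\pi)^{-m}(1-\rhoV^{2})^{-m/2}\) for all \(h\ge1\). Integrating over the product of two balls of radius \(\sqrt{u_m(q)}\) gives
\[
    p_h(q)\le (1-\rhoV^2)^{-m/2}\bigl((2\pi)^{-m/2}v_m u^{m/2}\bigr)^2 .
\]
By \eqref{eq:chi-small-u} the bracket is at most \(2F_m(u)=2q\) for small \(q\), so \(p_h(q)\le 4(1-\rhoV^2)^{-m/2}q^2\). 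The same bound also controls the covariance sum in the variance estimate, giving \(\Var=O(Nq+N^2q^2\cdot\text{small})\). If that is too crude, I will instead use \(p_h-q^2\le Cq^2|\rhoV|^{2h}\), obtained from a density-ratio comparison. Markov's inequality then gives \(\bP(C_{L_N}(T_N(q_N))>0)\to0\). Combining this with the Chebyshev bound yields \(\bP(\Delta(S_{N,K_N})\le L_N)\to0\), and therefore \eqref{eq:tc-convergence-probability}.
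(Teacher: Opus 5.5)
Your proposal is correct and follows essentially the paper's route. You reduce to $\bP(\Delta(S_{N,K_N})\le L_N)\to0$ via Lemma~\ref{lem:tc-bound}, enlarge to the threshold set $T_N(q_N)$ with $q_N=\omega_NK_N/N$ (the paper takes $\omega_N=(L_NK_N^2/N)^{-1/4}$), and then combine Chebyshev for $\{|T_N(q_N)|<K_N\}$ with Markov for $C_{L_N}(T_N(q_N))$ through a uniform two-point bound.

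One caveat concerns the variance step. Your rounded bound $p_h(q)\le4(1-\rhoV^2)^{-m/2}q^2$ is enough for the collision count but not for the variance. It only yields $\Var(|T_N(q)|)=O((Nq)^2)$, so Chebyshev gives nothing. You therefore genuinely need one of two sharper bounds:
\begin{itemize}
\item your fallback $p_h(q)-q^2\le a_hq^2$ with $\sum_h a_h<\infty$, which is exactly the paper's Lemma~\ref{lem:uniform-two-point} (its pointwise density-ratio bound gives $a_h=O(|\rhoV|^h)$);
\item the unrounded sup-density bound $p_h(q)\le e^{u_m(q)}(1-\rhoV^{2h})^{-m/2}q^2$, which gives $\Var\le Nq+O(Nq^2)+O\bigl(u_m(q)\bigr)N^2q^2=o((Nq)^2)$.
\end{itemize}
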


The proof is given in Appendix~\ref{app:first-main-proof}.  Its essential
geometric input is the auxiliary sparsification theorem stated in
Section~\ref{sec:subcritical}.

\section{Internal model interpretation and numerical illustration}\label{sec:applications}

This section interprets and checks the asymptotic statements only within the
stylized one-step model. 

\subsection{Budget implications within the model}

The auxiliary sparsification theorem gives an internal geometric criterion.  To
make the probability of selecting two positions within radius \(L_N\) vanish, it
is sufficient that
\[
    K_N=o\!\left(\sqrt{\frac{N}{L_N}}\right).
\]
For the auxiliary Gaussian coordinates to have vanishing conditional total
correlation, the exact AR(1) determinant further requires
\[
    K_N|\rhoX|^{2(L_N+1)}\to0.
\]
For \(K_N=\lfloor N^\alpha\rfloor\), \(0<\alpha<1/2\), a logarithmic
separation \(L_N=\lceil c\log N\rceil\) with
\[
    c>\frac{\alpha}{2\log(1/|\rhoX|)}
\]
satisfies both conditions.

At \(K_N\sim\lambda\sqrt N\), Theorem~\ref{thm:critical-tc} gives a
non-vanishing lower bound for the model's Gaussian gap cost.  Its auxiliary
Poisson theorem identifies the corresponding probability of at least one
adjacent selected pair.  Every fixed positive value of \(K_N/\sqrt N\) belongs
to this critical window; \(\lambda=1\) is not a distinguished boundary. 

\subsection{Synthetic checks of the random-field model}

We simulate the same mathematical model with
\[
    m=3,
    \qquad
    \rhoV=\rhoX=0.5.
\]
For each realization, the exact lowest-score set is selected from
\(V_i=\|Y_i\|^2\).  If
\(S_{N,K}=\{s_1<\cdots<s_K\}\), the prescribed Gaussian total-correlation
cost is
\begin{equation}\label{eq:simulation-tc}
    \TC_{\Sigma_N}(S_{N,K})
    =
    -\frac12\sum_{a=1}^{K-1}
    \log\!\left(1-\rhoX^{2(s_{a+1}-s_a)}\right).
\end{equation}
We use \(N\in\{1024,4096,16384\}\) and 500 independent score-field
realizations per sequence length.

Figure~\ref{fig:direct-covariance-tc} plots the mean Gaussian cost on a
logarithmic vertical axis against \(K/\sqrt N\).  The approximate collapse of
the curves supports \(K/\sqrt N\) as the finite-size scaling variable in this
model.  It does not establish a categorical factorization error, a neural
unmasking threshold, or a generation-quality barrier.

\begin{figure}[!b]
    \centering
    \includegraphics[width=0.82\textwidth]
    {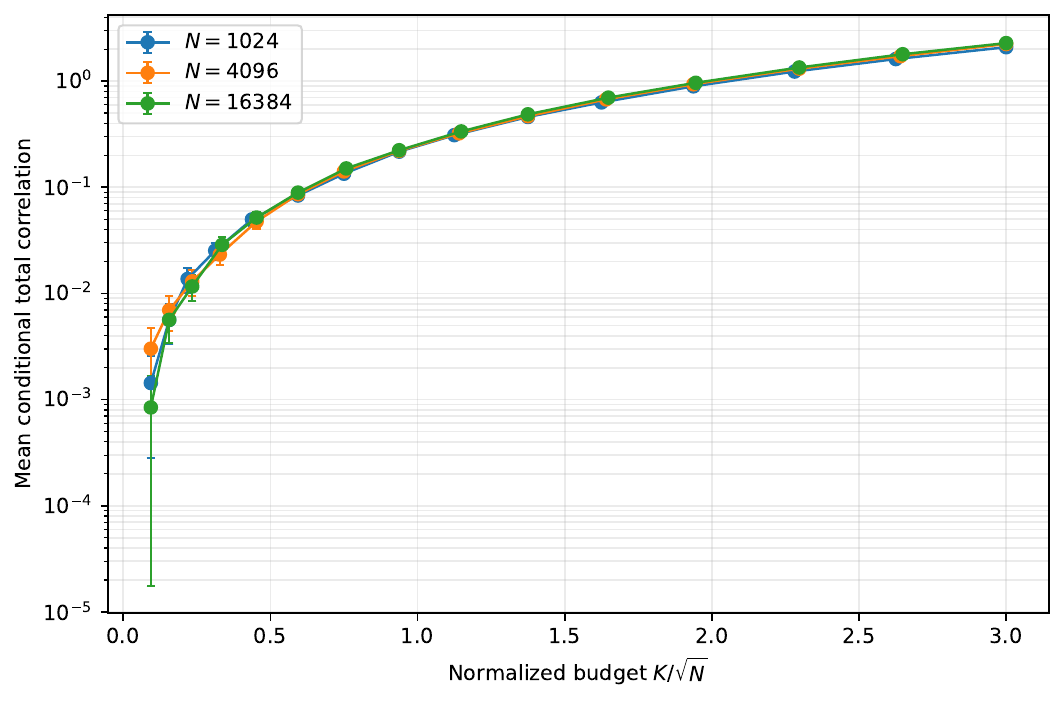}
    \caption{Mean conditional Gaussian total-correlation cost in the stylized
    one-step model.  Here \(m=3\), \(\rhoV=\rhoX=0.5\),
    \(N\in\{1024,4096,16384\}\), and each point uses 500 independent
    score-field realizations.  The vertical axis is logarithmic, and error bars
    are 95\% Monte Carlo confidence intervals for the mean.  The figure checks
    the internal scaling of the model, not a trained masked-diffusion decoder.}
    \label{fig:direct-covariance-tc}
\end{figure}

Figure~\ref{fig:direct-covariance-adjacent} checks the auxiliary critical theorem
more directly.  The empirical probability of no adjacent selected pair is
compared with the Poisson prediction, evaluated at \(\lambda=K/\sqrt N\).
This is an internal asymptotic check of the score field.

\begin{figure}[!t]
    \centering
    \includegraphics[width=0.82\textwidth]
    {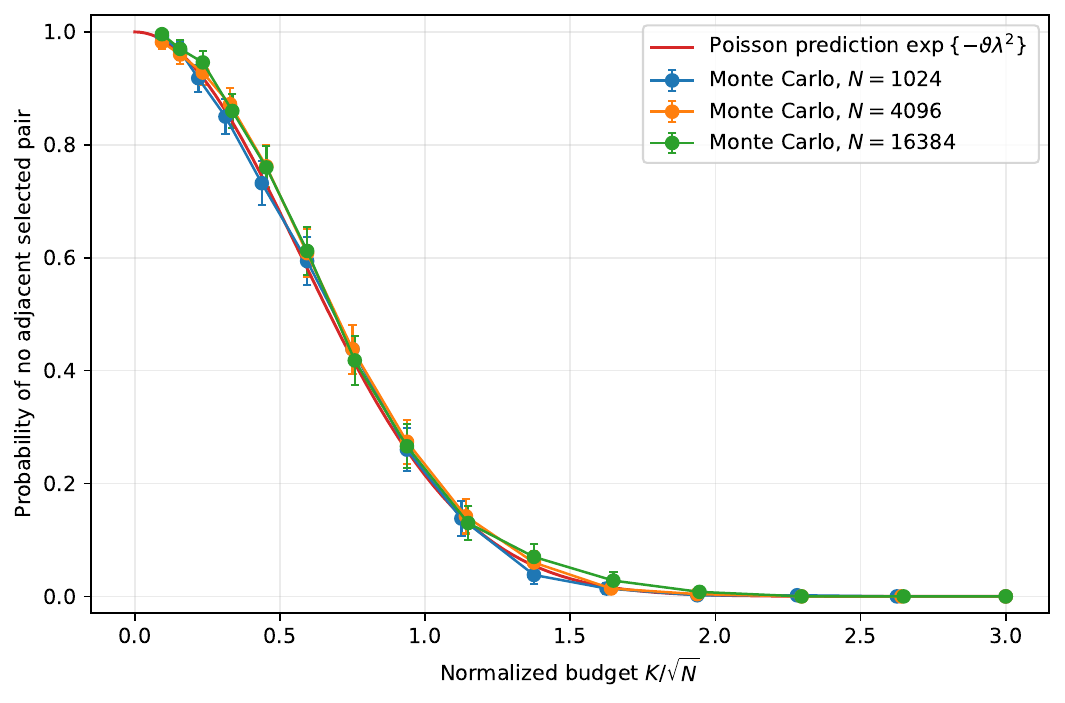}
    \caption{Probability of no adjacent selected pair in the stylized score
    model for \(m=3\), \(\rhoV=\rhoX=0.5\),
    \(N\in\{1024,4096,16384\}\), and 500 independent realizations per
    sequence length.  Solid points show Monte Carlo estimates with 95\%
    confidence intervals; the continuous curve is the auxiliary Poisson
    prediction \(\exp\{-\vartheta_{\rhoV,m}\lambda^2\}\).}
    \label{fig:direct-covariance-adjacent}
\end{figure}
\FloatBarrier

\section{Auxiliary sparsification for the one-step score model and proof roadmap}\label{sec:subcritical}

The first main theorem is proved through the following static geometric
statement.  It concerns only the locations of rare minima of the score field.
Its role is to show that the selected indices are separated far enough for the
prescribed AR(1) Gaussian gap cost to vanish.

\begin{theorem}[Auxiliary sparsification theorem]\label{thm:subcritical-spacing}
Let \((K_N)_{N\ge1}\) and \((L_N)_{N\ge1}\) be integer sequences satisfying
\[
    1\le K_N\le N,
    \qquad
    L_N\ge1,
\]
and
\begin{equation}\label{eq:subcritical-assumption}
    \frac{L_NK_N^2}{N}\longrightarrow0.
\end{equation}
Then
\begin{equation}\label{eq:subcritical-collision-result}
    \bP\bigl(C_{L_N}(S_{N,K_N})=0\bigr)\longrightarrow1,
\end{equation}
or equivalently
\begin{equation}\label{eq:subcritical-spacing-result}
    \bP\bigl(\Delta(S_{N,K_N})>L_N\bigr)\longrightarrow1.
\end{equation}
In particular, if \(K_N=o(\sqrt N)\), then
\begin{equation}\label{eq:no-adjacent-result}
    \bP\bigl(A(S_{N,K_N})=0\bigr)\longrightarrow1.
\end{equation}
\end{theorem}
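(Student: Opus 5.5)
The strategy is to replace the exact lowest-\(K\) set by a slightly larger threshold set \(T_N(q_N)\) and then apply a first-moment bound to collisions in that threshold set. Choose
\[
    q_N=\min\{1,\,2K_N/N\}
\]
(or \(q_N=(K_N+a_N)/N\) for a suitable slack \(a_N\)). On the event \(\{|T_N(q_N)|\ge K_N\}\), the \(K_N\) smallest scores all lie below \(u_m(q_N)\), so \(S_{N,K_N}\subset T_N(q_N)\). Since \(C_L\) is monotone in the set, this gives
\[
    \bP\bigl(C_{L_N}(S_{N,K_N})>0\bigr)\le \bP\bigl(|T_N(q_N)|<K_N\bigr)+\bP\bigl(C_{L_N}(T_N(q_N))>0\bigr).
\]
The equivalence with spacing follows from \eqref{eq:spacing-collision-equivalence}. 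The no-adjacent claim is the special case \(L_N=1\).

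For the second term, Markov's inequality gives \(\bP(C_L(T)>0)\le \bE C_L(T)\le N\sum_{h=1}^{L}p_h(q)\). I need a bound \(p_h(q)\le Cq^2\) that is uniform in \(h\ge1\) and in small \(q\); Lemma~\ref{lem:fixed-distance-small-ball} handles only fixed \(h\). For uniformity, use that \((Y_1,Y_{1+h})\) has joint density bounded by \((2\pi)^{-m}(1-\rhoV^2)^{-m/2}\), because \(|\rhoV^h|\le|\rhoV|\). Integrating this over the product of two balls of radius \(\sqrt{u_m(q)}\) gives \(p_h(q)\le (1-\rhoV^2)^{-m/2}(2\pi)^{-m}v_m^2u_m(q)^m\). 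By the small-ball expansion \eqref{eq:chi-small-u}, this is at most \(Cq^2\) for \(q\le q_0\). Hence \(\bE C_{L_N}(T_N(q_N))\le CNL_Nq_N^2\le 4CL_NK_N^2/N\to0\). Note that \eqref{eq:subcritical-assumption} forces \(K_N^2/N\to0\), so \(q_N\to0\) and the small-\(q\) bound applies for large \(N\).

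For the first term I need \(|T_N(q_N)|\ge K_N\) with high probability, where the mean is \(Nq_N=2K_N\). Chebyshev's inequality suffices. The variance is \(\Var|T_N(q)|=\sum_{i,j}\Cov(\mathbf 1_i,\mathbf 1_j)\le Nq+2N\sum_{h\ge1}|p_h(q)-q^2|\). This is the step I expect to be the main obstacle: covariances at long range must be controlled without a uniform-in-\(h\) asymptotic. I would split the sum over \(h\) into two ranges. For \(h\le H\), use the uniform bound \(p_h\le Cq^2\). For \(h>H\), use a density comparison: the joint density of \((Y_1,Y_{1+h})\) on the small ball differs from the product density by a factor \((1-r^2)^{-m/2}\exp(O(r\,u))\), with \(r=\rhoV^h\). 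This gives \(|p_h-q^2|\le Cq^2|\rhoV|^{h}\), and summing shows \(\sum_h|p_h-q^2|=O(q^2)\). In fact the crude uniform bound \(|p_h-q^2|\le C'q^2|\rhoV|^{h}\) holds for all \(h\), since \(|(1-r^2)^{-m/2}e^{O(ru)}-1|=O(r)\) once \(u\) is small. Therefore \(\Var|T_N(q_N)|\le C(Nq_N+Nq_N^2)=O(K_N)\), and Chebyshev gives \(\bP(|T_N|<K_N)\le \Var/K_N^2=O(1/K_N)\).

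This leaves the case of \(K_N\) bounded along a subsequence. There I would instead take \(q_N=M_N/N\) with \(M_N\to\infty\) slowly, for instance \(M_N=K_N+\omega_N\) where \(\omega_N\to\infty\) is chosen so that \(L_NM_N^2/N\to0\) still holds. Then Chebyshev gives \(\bP(|T_N|<K_N)\le O(M_N)/(M_N-K_N)^2\to0\) as long as \(\omega_N\gg\sqrt{M_N}\). Taking \(M_N=\max(2K_N,\omega_N)\) with \(\omega_N\to\infty\) and \(L_N\omega_N^2/N\to0\) treats both regimes uniformly, and completes the proof.
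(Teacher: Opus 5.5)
Your proposal is correct and follows essentially the same route as the paper: you enlarge \(S_{N,K_N}\) to a threshold set \(T_N(q_N)\), bound \(\bP(C_{L_N}(T_N(q_N))\ge1)\) by a first-moment estimate using a uniform two-point small-ball bound \(p_h(q)\le Cq^2\), and control \(\bP(M_N(q_N)<K_N)\) by Chebyshev with \(\Var M_N(q)=O(Nq)\), obtained from a Gaussian density-ratio comparison whose excess decays like \(|\rhoV|^h\). The only difference is cosmetic: the paper uses the single threshold \(q_N=(K_N/N)\alpha_N^{-1/4}\) with \(\alpha_N=L_NK_N^2/N\), which handles bounded and unbounded \(K_N\) at once, whereas your \(\max(2K_N,\omega_N)/N\) needs some \(\omega_N\to\infty\) with \(L_N\omega_N^2/N\to0\) (such \(\omega_N\) exists because \(L_N/N\le\alpha_N\to0\), a point you should state explicitly).
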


Theorem~\ref{thm:subcritical-spacing} is the geometric input to the first main
result.  On the event \(\Delta(S_{N,K_N})>L_N\), the exact AR(1) determinant
bound from Lemma~\ref{lem:tc-bound} gives
\begin{equation}\label{eq:main-text-tc-roadmap}
    0\le \TC_{\Sigma_N}(S_{N,K_N})
    \le
    \frac{K_N-1}{2}\,
    \frac{|\rhoX|^{2(L_N+1)}}{1-|\rhoX|^{2(L_N+1)}}.
\end{equation}
Thus the sparsification probability in
\eqref{eq:subcritical-spacing-result}, together with the correlation-decay
assumption of Theorem~\ref{thm:vanishing-tc}, forces the conditional total
correlation to vanish.  Appendix~\ref{app:first-main-proof} contains the full
proof of the auxiliary sparsification theorem and the formal deduction of the
first main theorem.

\section{Critical non-vanishing in the one-step Gaussian cost model}
\label{sec:critical}

The first main theorem gives a regime in which the model's Gaussian cost
vanishes.  This section proves the complementary statement that, at every fixed
positive square-root budget, the same cost is non-negligible.  The conclusion is
internal to the one-step model and is not a lower bound on categorical decoding
error or on per-token generation loss.  Recall the constants
\(\vartheta_{\rhoV,m}\) and \(c_{\rhoX}\) from
\eqref{eq:intro-critical-constants}.

\Needspace{18\baselineskip}
\begin{theorem}[Second main theorem: critical non-vanishing of conditional total correlation]
\label{thm:critical-tc}
Let \((K_N)_{N\ge1}\) be an integer sequence satisfying
\(1\le K_N\le N\) and
\[
    \frac{K_N}{\sqrt N}\longrightarrow\lambda\in(0,\infty).
\]
Let \(Z_\lambda\) have the Poisson distribution with mean
\(\vartheta_{\rhoV,m}\lambda^2\).  Then, for every integer \(r\ge1\),
\begin{equation}\label{eq:critical-tc-tail-bound}
    \liminf_{N\to\infty}
    \bP\!\left(
      \TC_{\Sigma_N}(S_{N,K_N})\ge r c_{\rhoX}
    \right)
    \ge
    \bP(Z_\lambda\ge r).
\end{equation}
In particular,
\begin{equation}\label{eq:critical-tc-obstruction}
    \liminf_{N\to\infty}
    \bP\!\left(
      \TC_{\Sigma_N}(S_{N,K_N})\ge c_{\rhoX}
    \right)
    \ge
    1-\exp\{-\vartheta_{\rhoV,m}\lambda^2\}>0,
\end{equation}
and
\begin{equation}\label{eq:critical-tc-expectation-bound}
    \liminf_{N\to\infty}
    \bE\!\left[\TC_{\Sigma_N}(S_{N,K_N})\right]
    \ge
    c_{\rhoX}\vartheta_{\rhoV,m}\lambda^2>0.
\end{equation}
Consequently,
\(\TC_{\Sigma_N}(S_{N,K_N})\) does not converge to zero in probability at a
fixed positive critical budget.
\end{theorem}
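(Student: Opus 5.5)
The plan is to reduce everything to counting adjacent selected pairs. By Lemma~\ref{lem:exact-ar1-determinant} and \eqref{eq:total-correlation}, for $S_{N,K}=\{s_1<\cdots<s_K\}$ we have $\TC_{\Sigma_N}(S_{N,K})=-\frac12\sum_a\log(1-\rhoX^{2(s_{a+1}-s_a)})$. Every summand is nonnegative, and every gap equal to $1$ contributes exactly $-\frac12\log(1-\rhoX^2)=c_{\rhoX}$. Hence, deterministically,
\[
\TC_{\Sigma_N}(S_{N,K_N})\ge c_{\rhoX}\,A(S_{N,K_N}).
\]
This gives $\bP(\TC\ge rc_{\rhoX})\ge\bP(A(S_{N,K_N})\ge r)$. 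So the whole theorem follows from the Poisson limit $A(S_{N,K_N})\Rightarrow \mathrm{Poisson}(\vartheta_{\rhoV,m}\lambda^2)$. Once we have it, \eqref{eq:critical-tc-tail-bound} is immediate, and \eqref{eq:critical-tc-obstruction} is the case $r=1$. For \eqref{eq:critical-tc-expectation-bound}, write $\bE[\TC]\ge c_{\rhoX}\sum_{r\ge1}\bP(A\ge r)$ and apply Fatou's lemma termwise. This gives $\liminf\ge c_{\rhoX}\sum_r\bP(Z_\lambda\ge r)=c_{\rhoX}\bE Z_\lambda$, and no uniform integrability is needed.

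For the Poisson limit I would first pass from the exact lowest-$K$ set to threshold sets. Fix $\varepsilon>0$ and put $q_\pm=(1\pm\varepsilon)K_N/N$. The count $|T_N(q)|$ has mean $Nq$. By \eqref{eq:D-covariance} the indicators have covariances summable in the lag, so $\Var|T_N(q)|=O(Nq)$. Chebyshev's inequality then gives $T_N(q_-)\subset S_{N,K_N}\subset T_N(q_+)$ with probability tending to one. Adjacent-pair counts are monotone under inclusion, so $A(T_N(q_-))\le A(S_{N,K_N})\le A(T_N(q_+))$ on this event. It therefore suffices to show that $A(T_N(q_\pm))$ converges to a Poisson law with mean $\vartheta_{\rhoV,m}\lambda^2(1\pm\varepsilon)^2$, and then let $\varepsilon\downarrow0$. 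The mean is easy: $\bE A(T_N(q))=(N-1)p_1(q)$, which by Lemma~\ref{lem:fixed-distance-small-ball} with $h=1$ equals $(1-\rhoV^2)^{-m/2}Nq^2(1+o(1))\to\vartheta_{\rhoV,m}\lambda^2(1\pm\varepsilon)^2$.

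The main obstacle is the Poisson approximation for $A(T_N(q))=\sum_i I_i$, where $I_i=\mathbf 1\{V_i\le u,V_{i+1}\le u\}$. These indicators form a stationary but not $m$-dependent sequence, since AR(1) dependence decays only geometrically. I would use the Chen--Stein method with a neighbourhood of radius $M_N=\lceil C\log N\rceil$ and bound the standard three terms $b_1,b_2,b_3$.
\begin{itemize}
\item The term $b_1\approx NM_Nq^4\to0$ is trivial.
\item The term $b_2$ needs three- and four-point small-ball bounds. For $|i-j|\le M_N$ I would bound $\bE I_iI_j$ by a uniform estimate $\bP(V_{a}\le u\ \forall a\in F)\le C^{|F|}q^{|F|}\prod(\text{gap factors})$. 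This follows from the Gaussian density of $(Y_a)_{a\in F}$ at the origin, whose determinant is a product of factors $(1-\rhoV^{2g})^{m}$ bounded below. One must also handle the overlapping case $j=i+1$, where only three sites occur. These give contributions $Nq^3$ and $NM_Nq^4$, both of which vanish.
\item For $b_3$, the long-range term, I would couple $Y$ beyond distance $M_N$ to an independent copy by writing $Y_{j}=\rhoV^{j-i}Y_i+\text{innovation}$. The perturbation has size $|\rhoV|^{M_N}$, which is polynomially small. It changes small-ball events of radius $\sqrt u\asymp N^{-1/(2m)}\cdot\text{const}$ only on a set of small probability, once $C$ is chosen large. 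I expect this to be the most delicate part.
\end{itemize}
As an alternative, I would compute factorial moments $\bE\binom{A}{r}$ directly. For $r$ tuples of well-separated adjacent pairs the same density argument gives $(\vartheta\lambda^2)^r/r!$, and clustered tuples contribute $o(1)$ by the uniform small-ball bound.
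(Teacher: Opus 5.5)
Your proposal is correct and follows the paper's proof. The deterministic bound $\TC_{\Sigma_N}(S)\ge c_{\rhoX}A(S)$ reduces everything to the adjacent-pair Poisson limit of Theorem~\ref{thm:exact-critical-poisson}, which the paper proves with the same $(1\pm\varepsilon)K_N/N$ threshold sandwich and with factorial moments (your stated alternative to Chen--Stein). Your Fatou argument on $\sum_{r\ge1}\bP(A(S_{N,K_N})\ge r)$ is an equivalent substitute for the paper's truncation $A\wedge M$ followed by monotone convergence.

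One justification in your sketch of the Poisson limit needs repair. The bound $\Var|T_N(q)|=O(Nq)$ does not follow from \eqref{eq:D-covariance}, which concerns $\Cov(V_i,V_j)$ rather than the indicators $\mathbf 1\{V_i\le u_m(q)\}$ at vanishing thresholds. It requires a small-ball bound on the indicator covariances, such as $\Cov\le a_hq^2$ with $\sum_h a_h<\infty$; this is what Lemmas~\ref{lem:uniform-two-point} and~\ref{lem:threshold-concentration} supply.
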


The proof uses the following adjacent-pair limit as an auxiliary theorem.

\Needspace{14\baselineskip}
\begin{theorem}[Auxiliary critical adjacent-pair limit]
\label{thm:exact-critical-poisson}
Let \((K_N)_{N\ge1}\) be an integer sequence satisfying
\(1\le K_N\le N\).  If
\[
    \frac{K_N}{\sqrt N}\longrightarrow\lambda\in(0,\infty),
\]
then
\begin{equation}\label{eq:exact-critical-poisson}
    A(S_{N,K_N})
    \Rightarrow
    \operatorname{Poisson}(\vartheta_{\rhoV,m}\lambda^2).
\end{equation}
In particular,
\begin{equation}\label{eq:exact-critical-zero-prob}
    \bP\bigl(A(S_{N,K_N})=0\bigr)
    \longrightarrow
    \exp\{-\vartheta_{\rhoV,m}\lambda^2\}.
\end{equation}
\end{theorem}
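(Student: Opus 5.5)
The plan is to replace the exact lowest-\(K\) set by threshold sets and prove a Poisson limit for threshold adjacent-pair counts by the method of factorial moments, or by Chen--Stein using local dependence.

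\textbf{Step 1: sandwiching by threshold sets.} Fix \(\varepsilon>0\) and set \(q_\pm=(1\pm\varepsilon)K_N/N\). The count \(|T_N(q)|\) has mean \(Nq\). By Proposition~\ref{prop:chi-square-field}, its variance is \(O(Nq)\), since the covariances \(\Cov(\mathbf 1_{V_i\le u},\mathbf 1_{V_j\le u})\) are dominated by \(\min(q,p_h(q))\) for small \(h\) and decay geometrically otherwise. Chebyshev then gives
\[
T_N(q_-)\subseteq S_{N,K_N}\subseteq T_N(q_+)
\]
with probability tending to one. Adjacent-pair counts are monotone in the set, so on this event
\[
A(T_N(q_-))\le A(S_{N,K_N})\le A(T_N(q_+)).
\]

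\textbf{Step 2: Poisson limit for a threshold set.} Let \(q=cK_N/N\sim c\lambda/\sqrt N\). Then
\[
\bE A(T_N(q))=(N-1)p_1(q)\to\vartheta_{\rhoV,m}c^2\lambda^2,
\]
by Lemma~\ref{lem:fixed-distance-small-ball} with \(h=1\). For convergence to \(\operatorname{Poisson}(\vartheta c^2\lambda^2)\) I will use Chen--Stein with dissociated neighbourhoods. Write \(I_i=\mathbf 1\{V_i,V_{i+1}\le u\}\), and take as the neighbourhood of \(i\) the indices \(|j-i|\le M_N\), with \(M_N=C\log N\) so that \(\rhoV^{M_N}\) is polynomially small.

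\begin{itemize}
\item The \(b_1\) term is \(O(NM_Nq^4)=O(M_N/N)\to0\).
\item The \(b_2\) term needs the overlapping configurations. A triple of consecutive small values has probability \(O(q^3)\), and there are \(N\) of them, contributing \(O(N^{-1/2})\). For pairs \((i,j)\) with \(2\le|j-i|\le M_N\), I need a four-point small-ball bound \(\bP(\text{four specified }V\text{'s}\le u)\le Cq^4\) uniformly in the gaps. This follows because the joint Gaussian density of the four \(Y\)'s is bounded whenever the gaps are \(\ge1\): the conditional variances of the AR(1) chain are bounded below by \(1-\rhoV^2\). Indeed, the Markov property gives a density bound \(\prod(2\pi(1-\rhoV^2))^{-m/2}\) for any set of distinct indices. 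So \(b_2=O(NM_N^2q^4)+O(N q^3)\to0\).
\item The \(b_3\) term is handled by conditioning. Beyond distance \(M_N\), I would couple the chain. Given \(Y_{i+1}\), the vector \(Y_{i+M}\) is \(\rhoV^M Y_{i+1}\) plus independent noise, so the conditional small-ball probability is perturbed by a factor \(1+O(\rhoV^{M}\sqrt{\log N}/\sqrt u)\). This needs care because \(u\sim q^{2/m}\) is polynomially small; choosing \(C\) large makes the perturbation \(o(1)\) uniformly on the event \(\max\|Y\|\le\sqrt{3\log N}\).
\end{itemize}

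This mixing estimate is the main obstacle. Alternatively, one can compute factorial moments of \(A\) directly by the same bounded-density/decoupling estimates.

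\textbf{Step 3: conclude.} Step 2 gives
\[
\limsup_N\bP(A(S)\le k)\le\bP(\operatorname{Poisson}(\vartheta(1-\varepsilon)^2\lambda^2)\le k)
\]
and the matching lower bound with \((1+\varepsilon)^2\). Letting \(\varepsilon\downarrow0\) and using continuity of Poisson probabilities in the mean proves \eqref{eq:exact-critical-poisson}. The statement \eqref{eq:exact-critical-zero-prob} is the case \(k=0\).
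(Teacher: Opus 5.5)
Your proposal is correct. Steps 1 and 3 coincide with the paper's argument: the two-sided sandwich \(T_N(q_N^-)\subseteq S_{N,K_N}\subseteq T_N(q_N^+)\) holds with probability tending to one, and the cdf squeeze then finishes as \(\varepsilon\downarrow0\).

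The genuine difference is Step 2. The paper proves the threshold Poisson limit by factorial moments, which is the alternative you mention in passing. Ordered \(r\)-tuples of adjacent pairs are split into two kinds:
\begin{itemize}
\item well-separated tuples, handled by an \(r\)-block Gaussian decoupling lemma at separation \(\lceil\log N\rceil\);
\item clustered tuples, handled by a uniform \(s\)-point small-ball bound for every \(s\le 2r\) plus cluster counting.
\end{itemize}
Poisson moment determinacy then closes the argument.

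Your Chen--Stein route needs only two-, three- and four-point small-ball bounds for \(b_1,b_2\). It puts all long-range dependence into the single term \(b_3\), which the Markov property reduces to the conditional law of \((Y_i,Y_{i+1})\) given the two boundary values \(Y_{i-M},Y_{i+1+M}\). This gives an explicit total-variation rate and avoids higher-order combinatorics. The paper's route needs no conditioning or truncation, but requires estimates of every order. Your finite-set small-ball bound via the product of AR(1) transition densities is also more direct than the paper's eigenvalue argument.

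Two steps need a real argument when you write this out.

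First, the variance bound in Step 1 does not follow from Proposition~\ref{prop:chi-square-field}. That result is about \(\Cov(V_i,V_j)\), not about covariances of small-ball indicators. The uniform bound \(p_h(q)\le Cq^2\) alone gives only \(\Var(M_N(q))=O(Nq+N^2q^2)=O(N)\), the same order as \(K_N^2\), so Chebyshev would fail. The decay in \(h\) is essential, and you can get it in any of three ways:
\begin{itemize}
\item from Lemma~\ref{lem:uniform-two-point}, which gives covariance at most \(a_hq^2\) with \(\sum_h a_h<\infty\);
\item from the Gaussian maximal-correlation inequality \(|\Cov(\mathbf 1_{\{V_i\le u\}},\mathbf 1_{\{V_{i+h}\le u\}})|\le|\rhoV|^hq\);
\item from your uniform bound for \(h\le C\log N\) combined with your \(b_3\)-type decoupling beyond that distance, which gives \(\Var(M_N(q))=o(N)\), enough for the sandwich.
\end{itemize}

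Second, in \(b_3\) the relative density perturbation on the small ball is actually \(O(|\rhoV|^M\sqrt{u\log N}+\rhoV^{2M}\log N)\), with \(\sqrt u\) in the numerator. So the smallness of \(u\) is no obstacle; your \(1/\sqrt u\) version is merely conservative. On the complement of the truncation event, bound \(\bP(B_i(q)\mid Y_{i-M},Y_{i+1+M})\le Cq^2\) using the bounded conditional density, not by \(1\), so that summing over \(i\) still gives \(o(1)\).
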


The proof of Theorem~\ref{thm:exact-critical-poisson}, including the factorial-moment argument and the two-sided threshold sandwich, is given in Appendix~\ref{app:critical-poisson-proof}.

\begin{proof}[Proof of Theorem~\ref{thm:critical-tc}]
Write a deterministic selected set as \(S=\{i_1<\cdots<i_k\}\).  By
Lemma~\ref{lem:exact-ar1-determinant},
\[
    \TC_{\Sigma_N}(S)
    =-\frac12\sum_{a=1}^{k-1}
      \log\!\left(1-\rhoX^{2(i_{a+1}-i_a)}\right).
\]
Every index \(a\) with \(i_{a+1}-i_a=1\) contributes exactly
\(c_{\rhoX}\), and all other summands are nonnegative.  Hence
\begin{equation}\label{eq:tc-dominates-adjacencies}
    \TC_{\Sigma_N}(S)\ge c_{\rhoX}A(S).
\end{equation}
For every integer \(r\ge1\),
\[
    \{A(S_{N,K_N})\ge r\}
    \subseteq
    \{\TC_{\Sigma_N}(S_{N,K_N})\ge r c_{\rhoX}\}.
\]
Theorem~\ref{thm:exact-critical-poisson} implies
\[
    \bP(A(S_{N,K_N})\ge r)
    \longrightarrow
    \bP(Z_\lambda\ge r),
\]
which proves \eqref{eq:critical-tc-tail-bound} and, by taking \(r=1\),
\eqref{eq:critical-tc-obstruction}.

For the expectation bound, \eqref{eq:tc-dominates-adjacencies} gives
\[
    \bE[\TC_{\Sigma_N}(S_{N,K_N})]
    \ge
    c_{\rhoX}\bE[A(S_{N,K_N})].
\]
For each \(M>0\), the function \(x\mapsto x\wedge M\) is bounded and
continuous on \(\mathbb N_0\).  Therefore the auxiliary Poisson convergence
gives
\[
    \lim_{N\to\infty}
    \bE[A(S_{N,K_N})\wedge M]
    =
    \bE[Z_\lambda\wedge M].
\]
Since \(A(S_{N,K_N})\ge A(S_{N,K_N})\wedge M\), first take the lower limit in
\(N\) and then let \(M\to\infty\).  Monotone convergence yields
\[
    \liminf_{N\to\infty}
    \bE[A(S_{N,K_N})]
    \ge
    \bE[Z_\lambda]
    =\vartheta_{\rhoV,m}\lambda^2.
\]
This proves \eqref{eq:critical-tc-expectation-bound}.
\end{proof}

\begin{corollary}[Adjacent-collision phase diagram]
\label{cor:adjacent-phase-diagram}
For the exact lowest-variance set \(S_{N,K_N}\):
\begin{enumerate}
    \item if \(K_N=o(\sqrt N)\), then
    \(\bP(A(S_{N,K_N})=0)\to1\);
    \item if \(K_N/\sqrt N\to\lambda\in(0,\infty)\), then
    \(A(S_{N,K_N})\Rightarrow
    \operatorname{Poisson}(\vartheta_{\rhoV,m}\lambda^2)\);
    \item if \(K_N/\sqrt N\to\infty\), then
    \begin{equation}\label{eq:phase-supercritical}
       \bP(A(S_{N,K_N})\ge1)\longrightarrow1.
    \end{equation}
\end{enumerate}
\end{corollary}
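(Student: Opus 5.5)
Parts 1 and 2 need no new work. Part 1 is the special case \(L_N=1\) of Theorem~\ref{thm:subcritical-spacing}, since \(K_N=o(\sqrt N)\) gives \(K_N^2/N\to0\) and \(A(S)=C_1(S)\). Part 2 is exactly Theorem~\ref{thm:exact-critical-poisson}. All the effort goes into part 3, where \(K_N/\sqrt N\to\infty\).

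For part 3 I would reduce to the critical window by a monotonicity (coupling) argument. The key observation is that the exact lowest-\(K\) sets are nested in \(K\). Since there are almost surely no ties, \(S_{N,K}\subseteq S_{N,K'}\) whenever \(K\le K'\): both sets are initial segments of the same almost surely unique ordering of \(V_1,\dots,V_N\). The adjacency count \(A(S)\) is monotone under inclusion, so \(A(S_{N,K})\le A(S_{N,K'})\) pointwise on the outer probability space.

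Now fix \(\lambda\in(0,\infty)\) and set \(K_N^\lambda=\min\{K_N,\lfloor\lambda\sqrt N\rfloor\vee1\}\). Because \(K_N/\sqrt N\to\infty\), we have \(K_N^\lambda=\lfloor\lambda\sqrt N\rfloor\) for all large \(N\), so \(K_N^\lambda/\sqrt N\to\lambda\). Theorem~\ref{thm:exact-critical-poisson} then applies and gives
\[
\liminf_{N\to\infty}\bP\bigl(A(S_{N,K_N})\ge1\bigr)\ge\lim_{N\to\infty}\bP\bigl(A(S_{N,K_N^\lambda})\ge1\bigr)=1-\exp\{-\vartheta_{\rhoV,m}\lambda^2\}.
\]
Since \(\lambda\) is arbitrary, letting \(\lambda\to\infty\) yields \eqref{eq:phase-supercritical}.

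The only delicate points are bookkeeping. First, the nesting must hold simultaneously for all \(K\), which follows from the almost sure absence of ties established before Definition~\ref{def:topk-threshold-sets}. Second, the truncated sequence must satisfy \(1\le K_N^\lambda\le N\). I do not expect a real obstacle, because the whole argument is a monotone coupling onto the critical Poisson limit. A direct second-moment argument on adjacent pairs for large \(K_N\) would be harder, since a supercritical \(K_N\) may be of order \(N\), where the small-ball asymptotics of Lemma~\ref{lem:fixed-distance-small-ball} no longer apply; the coupling avoids this entirely.
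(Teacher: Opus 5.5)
Your proposal is correct and follows essentially the same route as the paper: parts 1 and 2 come directly from Theorem~\ref{thm:subcritical-spacing} with \(L_N\equiv1\) and from Theorem~\ref{thm:exact-critical-poisson}, and part 3 uses the nesting of the lowest-\(K\) sets to compare with a critical budget of order \(\lambda\sqrt N\), applies the Poisson limit there, and then lets \(\lambda\to\infty\). Your truncated budget \(K_N^\lambda=\min\{K_N,\lfloor\lambda\sqrt N\rfloor\vee1\}\) is a slightly more careful version of the paper's auxiliary budget \(\widetilde K_N=\lfloor\lambda_0\sqrt N\rfloor\), which the paper only compares with \(K_N\) for large \(N\).
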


\begin{proof}[Proof of Corollary~\ref{cor:adjacent-phase-diagram}]
The first statement is Theorem~\ref{thm:subcritical-spacing} with \(L_N\equiv1\).
The second statement is Theorem~\ref{thm:exact-critical-poisson}.

For the third statement, fix any \(\lambda_0>0\) and define the auxiliary
budget \(\widetilde K_N=\lfloor\lambda_0\sqrt N\rfloor\).  Since
\(K_N/\sqrt N\to\infty\), we have \(\widetilde K_N\le K_N\) for all
sufficiently large \(N\).  By monotonicity of the order-statistic sets,
\begin{equation}\label{eq:topk-monotonicity}
    S_{N,\widetilde K_N}\subseteq S_{N,K_N}
\end{equation}
almost surely for all sufficiently large \(N\).  Therefore
\[
    \bP\bigl(A(S_{N,K_N})=0\bigr)
    \le
    \bP\bigl(A(S_{N,\widetilde K_N})=0\bigr).
\]
By Theorem~\ref{thm:exact-critical-poisson}, the right-hand side converges to
\(\exp\{-\vartheta_{\rhoV,m}\lambda_0^2\}\).  Taking \(\limsup\) and then letting
\(\lambda_0\to\infty\) gives
\[
    \limsup_{N\to\infty}\bP\bigl(A(S_{N,K_N})=0\bigr)=0,
\]
which is equivalent to \eqref{eq:phase-supercritical}.
\end{proof}

\begin{corollary}[Supercritical total-correlation obstruction]
\label{cor:supercritical-tc}
If \(K_N/\sqrt N\to\infty\), then
\begin{equation}\label{eq:supercritical-tc-obstruction}
    \bP\!\left(
      \TC_{\Sigma_N}(S_{N,K_N})\ge c_{\rhoX}
    \right)\longrightarrow1.
\end{equation}
\end{corollary}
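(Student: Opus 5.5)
The plan is to combine the pointwise domination of the Gaussian cost by the adjacent-pair count with the supercritical part of Corollary~\ref{cor:adjacent-phase-diagram}. No new probabilistic estimate is needed; the corollary is a deterministic inequality plus a monotonicity of events.

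First, fix a realization of the score field, so that \(S_{N,K_N}\) is a deterministic set. By Lemma~\ref{lem:exact-ar1-determinant} and the closed form \eqref{eq:total-correlation}, the cost is the sum of gap terms \(-\tfrac12\log(1-\rhoX^{2g_a})\), all of which are nonnegative. Each gap equal to one contributes exactly \(c_{\rhoX}\). This gives the pointwise inequality \eqref{eq:tc-dominates-adjacencies},
\[
    \TC_{\Sigma_N}(S_{N,K_N})\ge c_{\rhoX}\,A(S_{N,K_N}),
\]
which holds almost surely for every \(N\), exactly as in the proof of Theorem~\ref{thm:critical-tc}. Consequently
\[
    \{A(S_{N,K_N})\ge1\}\subseteq\{\TC_{\Sigma_N}(S_{N,K_N})\ge c_{\rhoX}\}.
\]

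Second, invoke item 3 of Corollary~\ref{cor:adjacent-phase-diagram}, equation \eqref{eq:phase-supercritical}. Under \(K_N/\sqrt N\to\infty\) it gives \(\bP(A(S_{N,K_N})\ge1)\to1\). Monotonicity of probability over the inclusion above then yields
\[
    \liminf_{N\to\infty}\bP\bigl(\TC_{\Sigma_N}(S_{N,K_N})\ge c_{\rhoX}\bigr)\ge 1,
\]
and hence \eqref{eq:supercritical-tc-obstruction}.

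There is essentially no obstacle at this level. All the difficulty lies upstream, in the Poisson limit of Theorem~\ref{thm:exact-critical-poisson}. The supercritical phase statement reduces to that limit through the nesting \(S_{N,\widetilde K_N}\subseteq S_{N,K_N}\) of lowest-\(K\) sets and by letting \(\lambda_0\to\infty\). The only points to check are that \(A(\cdot)\) is monotone under set inclusion, and that \(c_{\rhoX}>0\), which holds because \(0<|\rhoX|<1\).
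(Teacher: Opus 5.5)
Your proposal is correct and is essentially the paper's proof. The paper likewise combines the pointwise bound \(\TC_{\Sigma_N}(S)\ge c_{\rhoX}A(S)\) from \eqref{eq:tc-dominates-adjacencies} with item 3 of Corollary~\ref{cor:adjacent-phase-diagram}, which gives \(\bP(\TC_{\Sigma_N}(S_{N,K_N})\ge c_{\rhoX})\ge\bP(A(S_{N,K_N})\ge1)\to1\).
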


\begin{proof}
Equation~\eqref{eq:tc-dominates-adjacencies} and
Corollary~\ref{cor:adjacent-phase-diagram} give
\[
\bP\!\left(
  \TC_{\Sigma_N}(S_{N,K_N})\ge c_{\rhoX}
\right)
\ge
\bP(A(S_{N,K_N})\ge1)
\longrightarrow1.
\]
\end{proof}

\begin{remark}[Scope of the square-root result]
The auxiliary collision results, proved in
Appendix~\ref{app:critical-poisson-proof}, identify
\(K\asymp\sqrt N\) as the exact
adjacent-collision scale.  Their principal role here is to prove the second main
total-correlation theorem, Theorem~\ref{thm:critical-tc}.  That theorem gives
explicit probability and expectation lower bounds at fixed positive critical
budgets, while Theorem~\ref{thm:vanishing-tc} gives a sufficient subcritical
regime for vanishing.  We do not claim an exact limiting distribution or a
complete phase diagram for the total-correlation statistic itself.
\end{remark}

\section{Discussion}\label{sec:discussion}

The mathematical content of the paper can be summarized without diffusion
terminology.  A locally dependent nonnegative score field is sampled, the
\(K\) smallest scores are selected, and the selected spacings are charged the
gap cost (total correlation)
\[
    f(h)=-\frac12\log(1-\rhoX^{2h}).
\]
The first theorem shows that this cost vanishes under a conservative budget and
correlation-decay regime.  The second shows that adjacent rare minima create a
non-vanishing one-sided obstruction at the square-root collision scale.  This is
a stochastic-geometry result about selected minima plus a prescribed decreasing
gap cost.

The Gaussian covariance representation provides a convenient information-
theoretic interpretation of that gap cost, but it should not be confused with a
categorical model derived from a masked decoder.  The score field and dependence
field have separate parameters, \(\rhoV\) and \(\rhoX\), and no common hidden
categorical distribution is specified.  Moreover, once the selected set is
known, diagonal score magnitudes cancel from Gaussian total correlation.  Low
scores affect the conclusion only by changing where selected indices occur; low
marginal variance does not itself make a fixed Gaussian pair less dependent.

Actual scheduling
work studies reverse-time token ordering, learned unmasking policies, and global
multi-step errors \citep{kim2025tokenordering,li2025provable,
chen2026optimal,hong2026policies,jazbec2026unmasking}.  The present model offers
a tractable baseline for a single confidence-ranked selection step: it isolates
how short-range lower-tail clustering can create nearby selections and how an
externally specified distance-dependent cost responds.  Likewise, its relation
to Gaussian explanations of diffusion is more conceptual 
\citep{wang2024gaussian,sahoo2025duality}.

\paragraph{Limitations.}
The analysis treats one static selection step, assumes a chi-square short-range
score field, and attaches an AR(1) Gaussian dependence cost that is not derived
from the same categorical law as the confidence score.  It does not prove that
scores from a trained masked model have this lower-tail geometry, that neural
token dependence decays with distance as assumed, or that Gaussian total
correlation approximates categorical factorization error. These restrictions make the result a transparent
null model and suggest concrete empirical questions: measure lower-tail score
collisions and conditional dependence in trained models, and then test whether a
similar one-step scaling law is observed.

\section{Conclusion}\label{sec:conclusion}

Motivated by confidence-based parallel unmasking, we analyzed a special
one-step Gaussian random-field model.  A locally dependent score field determines the \(K\) selected
indices, and a separately specified AR(1) kernel assigns a conditional Gaussian
total-correlation cost to their spacings.

Within this model, the cost vanishes under an explicit conservative budget and
correlation-decay regime.  At the square-root collision scale, adjacent selected
pairs have a Poisson limit and imply non-vanishing probability and expectation
lower bounds for the same Gaussian cost.  

The contribution is thus a stochastic-geometry baseline for one confidence-
ranked selection step. Its value is to separate a clean probabilistic mechanism--rare minima,
selected-set geometry, and a distance-dependent cost that can be tested when richer score and dependence structures are measured in trained
masked models.

\subsection*{Acknowledgement}
The author thanks Yifen Chen for helpful discussions.

\appendix

\section{Proofs for the first main theorem}
\label{app:first-main-proof}

This appendix proves the auxiliary sparsification theorem,
Theorem~\ref{thm:subcritical-spacing}, and then gives the formal deduction of
Theorem~\ref{thm:vanishing-tc}.  The argument passes from exact lowest-\(K\)
selection to a slightly denser threshold set, controls the threshold-set size
and short-range collisions, and finally combines the resulting separation with
the exact AR(1) total-correlation bound.

Throughout this appendix, \(\rhoV\) controls the variance field and
\(\rhoX\) enters only through the final total-correlation deduction.  The proof
has three components.  First, we establish a uniform two-point
small-ball upper bound whose excess over the independent value is summable over
distances.  Second, we control the cardinality and short-range collision count
of threshold sets.  Third, we enlarge the exact lowest-\(K\) rule to a slightly
denser threshold rule and transfer the estimates back to the exact order
statistic.

\subsection{Uniform two-point small-ball bounds}

The fixed-distance asymptotic in Lemma~\ref{lem:fixed-distance-small-ball} identifies the leading constant for each fixed distance.  For the subcritical theorem, however, we need a bound that is uniform in the distance and whose dependence on the distance is summable.  The next lemma supplies this estimate.

\begin{lemma}[Uniform two-point small-ball domination]\label{lem:uniform-two-point}
Fix \(m\ge1\) and \(0<|\rhoV|<1\).  There exist \(q_0\in(0,1)\) and a nonnegative sequence \((a_h)_{h\ge1}\) such that
\begin{equation}\label{eq:a-summable}
    A_{\rhoV} := \sum_{h=1}^{\infty} a_h < \infty,
\end{equation}
and, for every \(h\ge1\) and every \(q\in(0,q_0]\),
\begin{equation}\label{eq:uniform-two-point-bound}
    p_h(q)
    = \bP\bigl(V_1\le u_m(q),\,V_{1+h}\le u_m(q)\bigr)
    \le (1+a_h)q^2.
\end{equation}
Consequently there is a finite constant \(B_{\rhoV}\), depending only on \((\rhoV,m)\), such that
\begin{equation}\label{eq:crude-two-point-bound}
    p_h(q)\le B_{\rhoV} q^2,
    \qquad h\ge1,\quad q\in(0,q_0].
\end{equation}
\end{lemma}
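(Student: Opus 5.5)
We bound the two-point probability by conditioning on \(Y_1\) and using the explicit Gaussian transition of the AR(1) chain. Put \(r=\rhoV^h\), so \(|r|\le|\rhoV|<1\), and \(u=u_m(q)\). The iteration in the proof of Proposition~\ref{prop:chi-square-field} gives
\[
    Y_{1+h}=rY_1+\sqrt{1-r^2}\,W,
\]
with \(W\sim N(0,I_m)\) independent of \(Y_1\). Hence, conditionally on \(Y_1=y\), the vector \(Y_{1+h}\) has density
\[
    \varphi_r(z\mid y)=(2\pi(1-r^2))^{-m/2}\exp\!\Bigl\{-\frac{\|z-ry\|^2}{2(1-r^2)}\Bigr\}\le (2\pi(1-r^2))^{-m/2}.
\]

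Integrating this density over the ball \(\{\|z\|^2\le u\}\) gives
\[
    \bP\bigl(V_{1+h}\le u\mid Y_1\bigr)\le (1-r^2)^{-m/2}(2\pi)^{-m/2}v_m u^{m/2}.
\]
Multiplying by \(\bP(V_1\le u)=q\) yields
\[
    p_h(q)\le (1-r^2)^{-m/2}\,q\,G_m(u),\qquad G_m(u):=(2\pi)^{-m/2}v_mu^{m/2}.
\]
It remains to compare \(G_m(u)\) with \(q=F_m(u)\). The chi-square density satisfies \(f_m(s)=(2\pi)^{-m/2}\cdot\frac{d}{ds}(v_m s^{m/2})\cdot e^{-s/2}\). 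Therefore
\[
    F_m(u)=\int_0^u (2\pi)^{-m/2}\,d(v_ms^{m/2})\,e^{-s/2}\ge e^{-u/2}G_m(u),
\]
that is, \(G_m(u)\le e^{u/2}q\). Altogether,
\[
    p_h(q)\le (1-\rhoV^{2h})^{-m/2}e^{u_m(q)/2}q^2.
\]

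The one real obstacle is that the factor \(e^{u/2}\) is not summable over \(h\): it gives an excess \(e^{u/2}-1\) for every \(h\), so it cannot simply be absorbed into \(a_h\). The fix is to use the Gaussian factor more carefully. Write \(\|z-ry\|^2\ge (1-\epsilon)\|z\|^2\)? That inequality fails in general. A cleaner route keeps the bound \(\varphi_r\le(2\pi(1-r^2))^{-m/2}\) and instead compares with the independent case exactly. Since \(\bP(V_{1+h}\le u)=q\), we have
\[
    p_h(q)-q^2=\bE\Bigl[\mathbf 1_{\{V_1\le u\}}\int_{\|z\|^2\le u}\bigl(\varphi_r(z\mid Y_1)-\varphi_0(z)\bigr)\,dz\Bigr].
\]
On \(\|y\|^2,\|z\|^2\le u\le u_0\), the ratio \(\varphi_r(z\mid y)/\varphi_0(z)\) is bounded by
\[
    (1-r^2)^{-m/2}\exp\Bigl\{\frac{|r|\,\|z\|\,\|y\|}{1-r^2}+\frac{\|z\|^2}{2}\Bigr\}.
\]
This is at most \(1+a_h\), where
\[
    a_h:=(1-\rhoV^{2h})^{-m/2}\exp\Bigl\{\frac{|\rhoV|^h u_0}{1-\rhoV^2}\Bigr\}-1+\epsilon_h.
\]
The remaining \(e^{\|z\|^2/2}\) term is handled by noting it equals the ratio \(\varphi_0(0)/\varphi_0(z)\) exactly. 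So the cleaner way is to bound the ratio to \(\varphi_0(z)\) via \(\exp\{-\|z-ry\|^2/(2(1-r^2))+\|z\|^2/2\}\). Expanding the exponent gives
\[
    \frac{-r^2\|z\|^2+2r\langle z,y\rangle-r^2\|y\|^2}{2(1-r^2)}\le\frac{|r|u}{1-r^2}.
\]
Hence
\[
    p_h(q)\le (1-r^2)^{-m/2}e^{|r|u_0/(1-\rhoV^2)}q^2.
\]
Choose \(q_0\) so that \(u_0:=u_m(q_0)\le1\), and set
\[
    a_h=(1-\rhoV^{2h})^{-m/2}e^{|\rhoV|^h/(1-\rhoV^2)}-1.
\]
This quantity is \(O(|\rhoV|^h)\), so it is summable. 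Finally take \(B_{\rhoV}=1+\sup_h a_h\), which is finite.
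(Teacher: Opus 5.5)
Your final argument is correct and is essentially the paper's proof. The ratio $\varphi_r(z\mid y)/\varphi_0(z)$ you bound is exactly the Radon--Nikodym derivative $R_r(y,z)$ of the law of $(Y_1,Y_{1+h})$ with respect to the product standard Gaussian law. Bounding its exponent by $|r|u/(1-r^2)$ on the small ball and then integrating is precisely how the paper obtains $1+a_h=(1-r_h^2)^{-m/2}\exp\{|r_h|u_0/(1-r_h^2)\}$ with $u_0=1$. In a written version you should drop the preliminary $e^{u/2}$ bound and the abandoned $a_h$ containing the undefined $\epsilon_h$. You should also spell out in one line why $(1-\rhoV^{2h})^{-m/2}e^{|\rhoV|^h/(1-\rhoV^2)}-1=O(|\rhoV|^h)$.
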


\begin{proof}
Let \(G,H\in\R^m\) be centered Gaussian vectors satisfying
\[
    \Cov(G)=I_m,
    \qquad
    \Cov(H)=I_m,
    \qquad
    \Cov(G,H)=rI_m,
\]
where \(|r|<1\).  Relative to the product law of two independent \(N(0,I_m)\) vectors, the joint law of \((G,H)\) has Radon--Nikodym derivative
\begin{equation}\label{eq:rn-ratio}
    R_r(x,y)
    = (1-r^2)^{-m/2}
      \exp\left\{
      \frac{2r x\cdot y-r^2(\norm{x}^2+\norm{y}^2)}{2(1-r^2)}
      \right\}.
\end{equation}
Indeed, the covariance matrix is
\[
    \begin{pmatrix}
        I_m & rI_m \\
        rI_m & I_m
    \end{pmatrix},
\]
whose determinant is \((1-r^2)^m\), and its inverse is
\[
    \frac{1}{1-r^2}
    \begin{pmatrix}
        I_m & -rI_m \\
        -rI_m & I_m
    \end{pmatrix}.
\]
Dividing the corresponding joint density by the product standard Gaussian density gives \eqref{eq:rn-ratio}.

Set \(u_0=1\) and define
\begin{equation}\label{eq:q0-definition}
    q_0 := F_m(u_0).
\end{equation}
If \(0<q\le q_0\), then \(u_m(q)\le u_0\).  On the event
\[
    \norm{x}^2\le u_m(q),
    \qquad
    \norm{y}^2\le u_m(q),
\]
we have
\[
    2r x\cdot y-r^2(\norm{x}^2+\norm{y}^2)
    \le 2|r|\norm{x}\norm{y}
    \le 2|r|u_m(q)
    \le 2|r|u_0.
\]
Thus
\begin{equation}\label{eq:rn-ball-bound}
    R_r(x,y)
    \le
    (1-r^2)^{-m/2}
    \exp\left\{\frac{|r|u_0}{1-r^2}\right\}.
\end{equation}
For the AR(1) field, the correlation at distance \(h\) is \(r_h=\rhoV^h\).  Define
\begin{equation}\label{eq:ah-definition}
    1+a_h
    :=
    (1-r_h^2)^{-m/2}
    \exp\left\{\frac{|r_h|u_0}{1-r_h^2}\right\},
    \qquad h\ge1.
\end{equation}
The right-hand side is at least one, so \(a_h\ge0\).  Integrating \eqref{eq:rn-ball-bound} over the product event
\[
    \{\norm{x}^2\le u_m(q)\}\times \{\norm{y}^2\le u_m(q)\}
\]
under two independent standard Gaussian measures yields
\[
    p_h(q)
    \le (1+a_h)F_m(u_m(q))^2
    = (1+a_h)q^2,
\]
which proves \eqref{eq:uniform-two-point-bound}.

It remains to prove summability.  Since \(|r_h|=|\rhoV|^h\to0\), there exists \(h_0\) such that \(|r_h|\le1/2\) for all \(h\ge h_0\).  For such \(h\), using \(-\log(1-t)\le 2t\) for \(0\le t\le1/4\), we obtain
\[
    \log(1+a_h)
    = -\frac{m}{2}\log(1-r_h^2)
      + \frac{|r_h|u_0}{1-r_h^2}
    \le m r_h^2 + 2u_0|r_h|
    \le (m+2u_0)|r_h|.
\]
For all sufficiently large \(h\), the last bound is at most one.  Since \(e^x-1\le e x\) for \(0\le x\le1\), it follows that
\[
    a_h \le e(m+2u_0)|\rhoV|^h
\]
for all sufficiently large \(h\).  Hence \(\sum_h a_h<\infty\).  Finally, \eqref{eq:crude-two-point-bound} follows from \eqref{eq:uniform-two-point-bound} with
\[
    B_{\rhoV} := 1+\sup_{h\ge1}a_h < \infty.
\]
\end{proof}

\subsection{Threshold-set estimates}

For \(q\in(0,1)\), recall the threshold set
\[
    T_N(q)=\{i\in[N]:V_i\le u_m(q)\}
\]
and write
\begin{equation}\label{eq:MNq-definition}
    M_N(q):=|T_N(q)|.
\end{equation}
The next two lemmas are the only probabilistic inputs needed for exact lowest-\(K\) selection.

\begin{lemma}[Short-range collisions in a threshold set]\label{lem:threshold-collision}
There exists a finite constant \(C_{\mathrm{col}}=C_{\mathrm{col}}(\rhoV,m)\) such that, for all \(N\ge1\), all integers \(L\ge1\), and all \(q\in(0,q_0]\),
\begin{equation}\label{eq:threshold-collision-bound}
    \bP\bigl(C_L(T_N(q))\ge1\bigr)
    \le C_{\mathrm{col}} N L q^2.
\end{equation}
\end{lemma}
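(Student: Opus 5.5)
The plan is a first-moment (Markov) bound on the collision count, fed by the uniform two-point estimate of Lemma~\ref{lem:uniform-two-point}. Since \(C_L(T_N(q))\) is a nonnegative integer,
\[
    \bP\bigl(C_L(T_N(q))\ge1\bigr)\le \bE\, C_L(T_N(q)).
\]
By definition~\eqref{eq:collision-count} and linearity of expectation,
\[
    \bE\, C_L(T_N(q))
    =\sum_{h=1}^{\min\{L,N-1\}}\sum_{i=1}^{N-h}
    \bP\bigl(V_i\le u_m(q),\,V_{i+h}\le u_m(q)\bigr).
\]

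Next, I would use stationarity of the AR(1) field \((Y_i)\). By Proposition~\ref{prop:chi-square-field}, \((Y_i,Y_{i+h})\) has the same law as \((Y_1,Y_{1+h})\): it is centered Gaussian with identity marginals and cross-covariance \(\rhoV^hI_m\). Hence each summand equals \(p_h(q)\), and
\[
    \bE\, C_L(T_N(q))=\sum_{h=1}^{\min\{L,N-1\}}(N-h)\,p_h(q).
\]
For \(q\in(0,q_0]\), the crude bound~\eqref{eq:crude-two-point-bound} gives \(p_h(q)\le B_{\rhoV}q^2\) uniformly in \(h\). Bounding \(N-h\le N\) and the number of distances by \(L\) then yields
\[
    \bE\, C_L(T_N(q))\le B_{\rhoV}\,NLq^2 .
\]
So \(C_{\mathrm{col}}:=B_{\rhoV}\) works, and it depends only on \((\rhoV,m)\). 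The cases \(N=1\), or \(L\ge N\), are covered trivially because the sum is empty or truncated.

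There is essentially no obstacle here. The only delicate input is uniformity in \(h\) of the two-point bound, which Lemma~\ref{lem:uniform-two-point} already supplies through the finite supremum \(\sup_h a_h\). A sharper constant, \(1+A_{\rhoV}/L\) in place of \(B_{\rhoV}\), is available from the summable \(a_h\), but the crude bound suffices for the stated inequality.
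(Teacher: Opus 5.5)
Your proof is correct and follows the paper's argument: Markov's inequality applied to \(\bE\, C_L(T_N(q))\), stationarity to reduce each summand to \(p_h(q)\), and then Lemma~\ref{lem:uniform-two-point}. The only difference is cosmetic: the paper sums \((1+a_h)\) over \(h\le L\) to get \(C_{\mathrm{col}}=1+A_{\rhoV}\), whereas you use the crude uniform bound to get \(C_{\mathrm{col}}=B_{\rhoV}\); both constants depend only on \((\rhoV,m)\) and are valid.
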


\begin{proof}
By Markov's inequality and the definition of \(C_L\),
\[
    \bP\bigl(C_L(T_N(q))\ge1\bigr)
    \le \bE C_L(T_N(q))
    = \sum_{h=1}^{L}\sum_{i=1}^{N-h}
      \bP(i\in T_N(q),\,i+h\in T_N(q)).
\]
If \(h\ge N\), the inner sum is empty; otherwise stationarity gives
\[
    \bP(i\in T_N(q),\,i+h\in T_N(q))=p_h(q).
\]
Using Lemma~\ref{lem:uniform-two-point},
\[
    \bE C_L(T_N(q))
    \le \sum_{h=1}^{L} (N-h)_+(1+a_h)q^2
    \le Nq^2\sum_{h=1}^{L}(1+a_h).
\]
Since \(L\ge1\) and \(\sum_{h\ge1}a_h=A_{\rhoV}<\infty\),
\[
    \sum_{h=1}^{L}(1+a_h)\le L+A_{\rhoV}\le (1+A_{\rhoV})L.
\]
Thus \eqref{eq:threshold-collision-bound} holds with \(C_{\mathrm{col}}=1+A_{\rhoV}\).
\end{proof}

\begin{lemma}[Concentration of the threshold-set size]\label{lem:threshold-concentration}
There exists a finite constant \(C_{\mathrm{var}}=C_{\mathrm{var}}(\rhoV,m)\) such that, for all \(N\ge1\) and all \(q\in(0,q_0]\),
\begin{equation}\label{eq:threshold-variance-bound}
    \Var(M_N(q))\le C_{\mathrm{var}}Nq.
\end{equation}
Consequently, for every deterministic sequence \(q_N\in(0,q_0]\) satisfying \(Nq_N\to\infty\),
\begin{equation}\label{eq:threshold-relative-concentration}
    \frac{M_N(q_N)}{Nq_N}\longrightarrow 1
    \qquad\text{in probability.}
\end{equation}
\end{lemma}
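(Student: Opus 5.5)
We expand the variance as a double sum of indicator covariances and split by distance, using the uniform two-point bound of Lemma~\ref{lem:uniform-two-point}; the ratio limit then follows from Chebyshev.

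Write \(M_N(q)=\sum_{i=1}^N\mathbf 1_{\{V_i\le u_m(q)\}}\). Each indicator is Bernoulli with mean \(q\). By stationarity,
\[
  \Var(M_N(q))=Nq(1-q)+2\sum_{h=1}^{N-1}(N-h)\bigl(p_h(q)-q^2\bigr).
\]
The diagonal term is at most \(Nq\). For the off-diagonal terms we invoke Lemma~\ref{lem:uniform-two-point} for \(q\le q_0\), which gives \(p_h(q)-q^2\le a_hq^2\). Summing over \(h\) bounds the total off-diagonal contribution by \(2NA_{\rhoV}q^2\le 2A_{\rhoV}Nq\), since \(q\le1\). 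Negative covariances only help, because we need an upper bound only. This yields
\[
  \Var(M_N(q))\le C_{\mathrm{var}}Nq,
  \qquad C_{\mathrm{var}}=1+2A_{\rhoV},
\]
uniformly in \(N\) and \(q\in(0,q_0]\).

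For the ratio statement, recall \(\bE M_N(q_N)=Nq_N\) from \eqref{eq:threshold-mean}. Chebyshev's inequality gives, for every \(\varepsilon>0\),
\[
  \bP\!\left(\abs{\frac{M_N(q_N)}{Nq_N}-1}>\varepsilon\right)
  \le\frac{C_{\mathrm{var}}Nq_N}{\varepsilon^2N^2q_N^2}
  =\frac{C_{\mathrm{var}}}{\varepsilon^2Nq_N}\longrightarrow0,
\]
because \(Nq_N\to\infty\). This proves \eqref{eq:threshold-relative-concentration}.

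The only nontrivial point is controlling the covariance sum uniformly in \(N\) and \(q\). The summable excess \(a_h\) from Lemma~\ref{lem:uniform-two-point} handles this directly, so no further obstacle is expected.
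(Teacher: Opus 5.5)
Your proposal is correct and matches the paper's proof essentially step for step. You expand the variance into diagonal and stationary covariance terms and bound the covariances by \(a_hq^2\) via Lemma~\ref{lem:uniform-two-point}, which gives the same constant \(C_{\mathrm{var}}=1+2A_{\rhoV}\); the ratio statement then follows from Chebyshev's inequality, exactly as in the paper.
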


\begin{proof}
Let
\[
    I_i(q)=\mathbf{1}_{\{V_i\le u_m(q)\}},
    \qquad i\in[N].
\]
Then \(M_N(q)=\sum_{i=1}^N I_i(q)\), \(\bE I_i(q)=q\), and
\[
    \Var(M_N(q))
    =\sum_{i=1}^{N}\Var(I_i(q))
      +2\sum_{1\le i<j\le N}\Cov(I_i(q),I_j(q)).
\]
The diagonal part is bounded by
\[
    \sum_{i=1}^{N}\Var(I_i(q))\le Nq.
\]
For pairs at distance \(h\), Lemma~\ref{lem:uniform-two-point} gives
\[
    \bE[I_i(q)I_{i+h}(q)]\le (1+a_h)q^2,
\]
and hence
\[
    \Cov(I_i(q),I_{i+h}(q))
    =\bE[I_i(q)I_{i+h}(q)]-q^2
    \le a_hq^2.
\]
Therefore
\[
    2\sum_{1\le i<j\le N}\Cov(I_i(q),I_j(q))
    \le 2\sum_{h=1}^{N-1}(N-h)a_hq^2
    \le 2Nq^2A_{\rhoV}.
\]
Since \(q\le1\),
\[
    \Var(M_N(q))\le Nq+2Nq^2A_{\rhoV}
    \le (1+2A_{\rhoV})Nq.
\]
This proves \eqref{eq:threshold-variance-bound} with \(C_{\mathrm{var}}=1+2A_{\rhoV}\).

If \(Nq_N\to\infty\), then for every \(\varepsilon>0\), Chebyshev's inequality gives
\[
    \bP\left(\left|\frac{M_N(q_N)}{Nq_N}-1\right|>\varepsilon\right)
    \le
    \frac{\Var(M_N(q_N))}{\varepsilon^2N^2q_N^2}
    \le
    \frac{C_{\mathrm{var}}}{\varepsilon^2Nq_N}
    \longrightarrow0.
\]
Thus \eqref{eq:threshold-relative-concentration} follows.
\end{proof}

\subsection{From threshold sets to exact lowest-\texorpdfstring{$K$}{K} selection}

The following elementary comparison is the bridge from analytically convenient threshold sets to the exact lowest-\(K\) variance rule.

\begin{lemma}[Threshold enlargement contains the lowest-\(K\) set]\label{lem:threshold-enlargement}
Let \(1\le K\le N\), \(q\in(0,1)\), and \(u=u_m(q)\).  If \(M_N(q)\ge K\), then
\begin{equation}\label{eq:topk-subset-threshold}
    S_{N,K}\subseteq T_N(q).
\end{equation}
Consequently, for every integer \(L\ge1\),
\begin{equation}\label{eq:topk-threshold-collision-comparison}
    \{C_L(S_{N,K})\ge1\}
    \subseteq
    \{M_N(q)<K\}\cup\{C_L(T_N(q))\ge1\}.
\end{equation}
\end{lemma}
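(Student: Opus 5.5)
The plan is to argue deterministically, on the almost-sure event where the scores \(V_1,\ldots,V_N\) are pairwise distinct. This event has probability one by the no-ties remark preceding Definition~\ref{def:topk-threshold-sets}. On it, \(S_{N,K}\) is the set of indices of the \(K\) smallest values.

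The first step is the inclusion \eqref{eq:topk-subset-threshold}, which I would prove by contradiction. Suppose \(M_N(q)\ge K\) and some \(j\in S_{N,K}\) satisfies \(j\notin T_N(q)\), so \(V_j>u\). Every index \(i\in T_N(q)\) has \(V_i\le u<V_j\). Hence at least \(M_N(q)\ge K\) indices have scores strictly smaller than \(V_j\). But \(j\in S_{N,K}\) means at most \(K-1\) indices have strictly smaller scores, which is a contradiction. This gives \(S_{N,K}\subseteq T_N(q)\).

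The second step is monotonicity of the collision count. The map \(S\mapsto C_L(S)\) in \eqref{eq:collision-count} is a sum of nonnegative products of indicators, so \(S\subseteq T\) implies \(C_L(S)\le C_L(T)\). Now take an outcome in \(\{C_L(S_{N,K})\ge1\}\). Either \(M_N(q)<K\), or \(M_N(q)\ge K\); in the latter case the first step gives \(C_L(T_N(q))\ge C_L(S_{N,K})\ge1\). This yields \eqref{eq:topk-threshold-collision-comparison}, up to the null tie event, which is harmless for all later probability bounds. One can also state the inclusion for every outcome by fixing a tie-breaking convention.

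No step here is a genuine obstacle. The only care needed is the tie issue, and the strict inequality \(V_i\le u<V_j\) is exactly what makes the counting in the first step work.
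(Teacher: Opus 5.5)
Your proposal is correct and follows essentially the paper's argument: the paper observes directly that \(M_N(q)\ge K\) forces the \(K\)-th order statistic to be at most \(u\), which is the direct form of your counting contradiction, and then transfers the colliding pair from \(S_{N,K}\) to \(T_N(q)\) exactly as your monotonicity step does. Your care about ties is not actually needed, since your counting argument works under any tie-breaking convention, so the inclusions hold for every outcome.
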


\begin{proof}
If \(M_N(q)\ge K\), then at least \(K\) of the variances \(V_1,\ldots,V_N\) are at most \(u\).  Hence the \(K\)-th order statistic of the multiset \(\{V_1,\ldots,V_N\}\) is at most \(u\).  Every index in \(S_{N,K}\) has score no larger than this \(K\)-th order statistic, and therefore has score at most \(u\).  This proves \eqref{eq:topk-subset-threshold}.  If \(C_L(S_{N,K})\ge1\) and \(M_N(q)\ge K\), then \(S_{N,K}\subseteq T_N(q)\), so the same pair is also counted by \(C_L(T_N(q))\).  This proves \eqref{eq:topk-threshold-collision-comparison}.
\end{proof}

\subsection{Proof of the subcritical spacing theorem}

We now prove the square-root sparsification theorem for the exact lowest-\(K\) set.

\begin{proof}[Proof of Theorem~\ref{thm:subcritical-spacing}]
Set
\begin{equation}\label{eq:alphaN-definition}
    \alpha_N := \frac{L_NK_N^2}{N}.
\end{equation}
By assumption, \(\alpha_N\to0\).  For all sufficiently large \(N\), define
\begin{equation}\label{eq:qN-choice}
    q_N := \frac{K_N}{N}\alpha_N^{-1/4}.
\end{equation}
Since \(\alpha_N>0\), this is well-defined.  We record the three elementary consequences of this choice.

First,
\begin{equation}\label{eq:qN-above-topk-density}
    \frac{q_N}{K_N/N}=\alpha_N^{-1/4}\longrightarrow\infty,
\end{equation}
so the threshold density is asymptotically much larger than the nominal selection fraction \(K_N/N\).  Equivalently,
\begin{equation}\label{eq:K-over-Nq}
    \frac{K_N}{Nq_N}=\alpha_N^{1/4}\longrightarrow0.
\end{equation}
Second,
\begin{equation}\label{eq:NLq2}
    NL_Nq_N^2
    =NL_N\left(\frac{K_N}{N}\right)^2\alpha_N^{-1/2}
    =\alpha_N^{1/2}
    \longrightarrow0.
\end{equation}
Third,
\begin{equation}\label{eq:Nq-diverges}
    Nq_N
    =K_N\alpha_N^{-1/4}
    =K_N^{1/2}\left(\frac{N}{L_N}\right)^{1/4}
    \longrightarrow\infty.
\end{equation}
To justify the last convergence, note that \(K_N\ge1\) and \(L_N/N=\alpha_N/K_N^2\le\alpha_N\to0\), so \((N/L_N)^{1/4}\to\infty\).  Also, \(q_N\to0\): using \(K_N/N=\sqrt{\alpha_N/(L_NN)}\),
\begin{equation}\label{eq:qN-to-zero}
    q_N=\frac{\alpha_N^{1/4}}{\sqrt{L_NN}}\le \alpha_N^{1/4}\longrightarrow0.
\end{equation}
Thus for all sufficiently large \(N\), \(q_N\in(0,q_0]\), where \(q_0\) is the constant from Lemma~\ref{lem:uniform-two-point}.

By Lemma~\ref{lem:threshold-concentration} and Chebyshev's inequality,
\[
    \bP(M_N(q_N)<K_N)
    \le
    \bP\left(|M_N(q_N)-Nq_N|>Nq_N-K_N\right).
\]
By \eqref{eq:K-over-Nq}, for all sufficiently large \(N\), \(Nq_N-K_N\ge \frac12Nq_N\).  Hence, using \eqref{eq:threshold-variance-bound},
\begin{equation}\label{eq:MN-less-K-bound}
    \bP(M_N(q_N)<K_N)
    \le
    \frac{4\Var(M_N(q_N))}{N^2q_N^2}
    \le
    \frac{4C_{\mathrm{var}}}{Nq_N}
    \longrightarrow0,
\end{equation}
where the last convergence follows from \eqref{eq:Nq-diverges}.

On the other hand, Lemma~\ref{lem:threshold-collision} and \eqref{eq:NLq2} give
\begin{equation}\label{eq:threshold-collision-goes-zero}
    \bP(C_{L_N}(T_N(q_N))\ge1)
    \le C_{\mathrm{col}}NL_Nq_N^2
    =C_{\mathrm{col}}\alpha_N^{1/2}
    \longrightarrow0.
\end{equation}
Combining \eqref{eq:topk-threshold-collision-comparison}, \eqref{eq:MN-less-K-bound}, and \eqref{eq:threshold-collision-goes-zero}, we obtain
\[
    \bP(C_{L_N}(S_{N,K_N})\ge1)
    \le
    \bP(M_N(q_N)<K_N)
    +\bP(C_{L_N}(T_N(q_N))\ge1)
    \longrightarrow0.
\]
This proves \eqref{eq:subcritical-collision-result}.  The equivalence with \eqref{eq:subcritical-spacing-result} follows from \eqref{eq:spacing-collision-equivalence}.  Finally, taking \(L_N\equiv1\) turns \eqref{eq:subcritical-assumption} into \(K_N^2/N\to0\), which is exactly \(K_N=o(\sqrt N)\), and \(C_{1}(S)=A(S)\) by \eqref{eq:adjacent-count}.  Thus \eqref{eq:no-adjacent-result} follows.
\end{proof}

\subsection{Deduction of the first main theorem from sparsification}

We next prove the total-correlation theorem.  The deterministic input is Lemma~\ref{lem:tc-bound}: once the selected sites are separated, the retained AR(1) correlations are uniformly small.

\begin{proof}[Proof of Theorem~\ref{thm:vanishing-tc}]
If \(K_N=1\), then \(\TC_{\Sigma_N}(S_{N,K_N})=0\).  Otherwise define
\[
    E_N:=\{\Delta(S_{N,K_N})>L_N\}.
\]
Assumption~\eqref{eq:subcritical-assumption-main} is the assumption of
Theorem~\ref{thm:subcritical-spacing}, so \(\bP(E_N)\to1\).  On \(E_N\),
Lemma~\ref{lem:tc-bound} gives
\begin{equation}\label{eq:tc-proof-exact-bound}
    0\le \TC_{\Sigma_N}(S_{N,K_N})
    \le
    \frac{K_N-1}{2}\,
    \frac{|\rhoX|^{2(L_N+1)}}{1-|\rhoX|^{2(L_N+1)}}.
\end{equation}
Because \(L_N\ge1\) and \(0<|\rhoX|<1\), the denominator is bounded below by
\(1-|\rhoX|^4>0\).  The deterministic right-hand side therefore tends to zero
under \eqref{eq:decorrelation-assumption-section3}.  For every \(\varepsilon>0\),
for all sufficiently large \(N\),
\[
    \bP\bigl(\TC_{\Sigma_N}(S_{N,K_N})>\varepsilon\bigr)
    \le \bP(E_N^c)\longrightarrow0.
\]
This proves \eqref{eq:tc-convergence-probability}.
\end{proof}

\section{Proof of the auxiliary critical adjacent-pair limit}
\label{app:critical-poisson-proof}

This appendix proves Theorem~\ref{thm:exact-critical-poisson}.  The proofs for
Theorem~\ref{thm:vanishing-tc} and its auxiliary sparsification theorem are in
Appendix~\ref{app:first-main-proof}.  The present proof first
establishes uniform finite-set small-ball bounds, then asymptotic decoupling for
well-separated adjacent blocks, and finally a factorial-moment Poisson limit for
threshold selection.  A two-sided threshold sandwich transfers that limit to the
exact lowest-\(K\) set.

For a threshold density \(q\in(0,1)\), define the adjacent threshold event
\begin{equation}\label{eq:Bi-definition}
    B_i(q)=\{V_i\le u_m(q),\ V_{i+1}\le u_m(q)\},
    \qquad 1\le i\le N-1,
\end{equation}
and its count
\begin{equation}\label{eq:threshold-adjacent-count}
    A_N(q)=\sum_{i=1}^{N-1}\mathbf 1_{B_i(q)}.
\end{equation}

\subsection{Auxiliary small-ball estimates}

We first record two estimates for finite collections of small-ball events.  The
first is a uniform upper bound for any fixed number of sites.  The second says
that well-separated adjacent blocks asymptotically decouple.

\begin{lemma}[Uniform finite-set small-ball upper bound]\label{lem:finite-set-small-ball}
For every integer \(s\ge1\), there exist constants \(q_s\in(0,1)\) and
\(C_s<\infty\), depending only on \((s,\rhoV,m)\), such that for every \(N\), every
set of distinct indices \(i_1,\ldots,i_s\in[N]\), and every \(q\in(0,q_s]\),
\begin{equation}\label{eq:finite-set-small-ball}
    \bP\bigl(V_{i_1}\le u_m(q),\ldots,V_{i_s}\le u_m(q)\bigr)
    \le C_s q^s.
\end{equation}
\end{lemma}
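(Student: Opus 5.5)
The plan is to bound the joint density of \((Y_{i_1},\ldots,Y_{i_s})\) near the origin uniformly in the index configuration. Because the small balls have radius \(\sqrt{u_m(q)}\to0\), such a bound directly produces the factor \(u^{ms/2}\asymp q^s\). The key structural fact is the Markov property of the AR(1) chain. Order the indices \(i_1<\cdots<i_s\) and put \(g_a=i_{a+1}-i_a\ge1\). Then the joint density factorizes as
\[
    \phi(y_1)\prod_{a=1}^{s-1}p_{g_a}(y_a,y_{a+1}),
\]
where \(\phi\) is the \(N(0,I_m)\) density and \(p_g(x,\cdot)\) is the \(N(\rhoV^g x,(1-\rhoV^{2g})I_m)\) transition density.

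Each transition density obeys the uniform bound
\[
    p_g(x,y)\le(2\pi)^{-m/2}(1-\rhoV^{2g})^{-m/2}\le(2\pi)^{-m/2}(1-\rhoV^2)^{-m/2},
\]
since \(1-\rhoV^{2g}\ge1-\rhoV^2\) for every \(g\ge1\); this is where the gaps being at least one is used. Likewise \(\phi\le(2\pi)^{-m/2}\). Hence the joint density is bounded by
\[
    (2\pi)^{-ms/2}(1-\rhoV^2)^{-m(s-1)/2}
\]
everywhere, uniformly over all configurations and all \(N\). Integrating over the product of balls \(\{\|y_a\|^2\le u\}\) gives
\[
    \bP\bigl(V_{i_1}\le u,\ldots,V_{i_s}\le u\bigr)\le(2\pi)^{-ms/2}(1-\rhoV^2)^{-m(s-1)/2}\,v_m^s\,u^{ms/2}.
\]
Alternatively, one could chain Radon--Nikodym ratios as in Lemma~\ref{lem:uniform-two-point}, but the direct density bound is simpler.

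It remains to convert \(u^{m/2}\) into \(q\), which comes from the small-\(u\) expansion \eqref{eq:chi-small-u}. Write
\[
    F_m(u)=(2\pi)^{-m/2}v_mu^{m/2}(1+O(u)).
\]
Choose \(q_s\) (in fact independent of \(s\)) so small that \(u=u_m(q)\le u_*\) for all \(q\le q_s\), where \(u_*\) is such that \(F_m(u)\ge\frac12(2\pi)^{-m/2}v_mu^{m/2}\) for \(u\le u_*\). Then \((2\pi)^{-m/2}v_mu^{m/2}\le 2q\), and substituting into the previous display gives \eqref{eq:finite-set-small-ball} with \(C_s=2^s(1-\rhoV^2)^{-m(s-1)/2}\).

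The only delicate point is ensuring that no constant depends on the gaps or on \(N\). This holds because the worst conditional variance occurs at gap one, so the argument is essentially routine. The main thing to check is the Markov factorization for nonconsecutive indices, which follows from Proposition~\ref{prop:chi-square-field} and the iterated AR(1) representation.
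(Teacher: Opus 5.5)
Your proof is correct, but you organize the density comparison differently from the paper.

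The paper bounds the Radon--Nikodym derivative of the law of $(Y_{i_1},\ldots,Y_{i_s})$ with respect to the product of $s$ independent $N(0,I_m)$ laws on the small-ball event. That derivative is $\det(R)^{-m/2}\exp\{-\frac12\sum_{a,b}(R^{-1}-I_s)_{ab}\,x_a\cdot x_b\}$. To control its quadratic form, the paper bounds $\norm{R^{-1}-I_s}_{\mathrm{op}}$ through a lower bound on $\lambda_{\min}(R)$, obtained from $\det R\ge(1-\rhoV^2)^{s-1}$ and $\lambda_{\max}(R)\le s$. Integrating against the product law then gives $C_sq^s$ exactly, with no chi-square asymptotics.

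You instead bound the Lebesgue density by its supremum via the Markov factorization. Your bound is the density at the origin, $(2\pi)^{-ms/2}\det(R)^{-m/2}$, relaxed by the same determinant estimate. So only the conditional variances enter, and no eigenvalue or operator-norm control is needed. The price is the extra step converting the ball volume $v_mu^{m/2}$ into $q$. That step can even be made non-asymptotic, since $F_m(u)\ge e^{-u/2}(2\pi)^{-m/2}v_mu^{m/2}$ for all $u>0$. Your route is shorter and gives explicit constants: $C_s=2^s(1-\rhoV^2)^{-m(s-1)/2}$, with $q_s$ independent of $s$.

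What the paper's formulation buys is uniformity with Lemmas~\ref{lem:uniform-two-point} and~\ref{lem:separated-blocks}. There the ratio on the small-ball event must be close to one, not merely bounded: a summable excess uniformly in $q$ in the first, and $1+o(1)$ in the second. A supremum bound followed by a volume-to-$q$ conversion leaves an excess over $q^2$ that does not decay with the distance, so it would not serve in Lemma~\ref{lem:uniform-two-point}. For the present lemma, where any constant depending on $(s,\rhoV,m)$ suffices, it is fully adequate.
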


\begin{proof}
Order the indices as \(i_1<\cdots<i_s\), and let
\(R=(\rhoV^{\abs{i_a-i_b}})_{a,b=1}^s\).  The Gaussian vector
\((Y_{i_1},\ldots,Y_{i_s})\in(\R^m)^s\) has covariance \(R\otimes I_m\).
We compare its law with the product law of \(s\) independent \(N(0,I_m)\)
vectors.

We first note that the eigenvalues of \(R\) are bounded away from zero by a
constant depending only on \((s,\rhoV)\).  Indeed, if
\(g_a=i_{a+1}-i_a\), then the scalar AR(1) Markov property gives
\begin{equation}\label{eq:submatrix-determinant}
    \det R=
    \prod_{a=1}^{s-1}(1-\rhoV^{2g_a})
    \ge (1-\rhoV^2)^{s-1}.
\end{equation}
Also \(\lambda_{\max}(R)\le s\), since every entry of \(R\) has absolute value at
most one.  Hence
\begin{equation}\label{eq:R-eigen-lower}
    \lambda_{\min}(R)
    \ge \frac{(1-\rhoV^2)^{s-1}}{s^{s-1}}
    =:c_{s,\rhoV}>0.
\end{equation}
Consequently both \(\det(R)^{-m/2}\) and \(\norm{R^{-1}-I_s}_{\mathrm{op}}\)
are bounded by constants depending only on \((s,\rhoV,m)\).

The density ratio of the joint law with respect to the product standard Gaussian
law is
\begin{equation}\label{eq:finite-set-density-ratio}
    \mathcal R_R(x_1,
    \ldots,x_s)
    =\det(R)^{-m/2}
    \exp\left\{-\frac12
    \sum_{a,b=1}^{s}(R^{-1}-I_s)_{ab}x_a\cdot x_b\right\}.
\end{equation}
Fix \(u_0>0\) and set \(q_s:=F_m(u_0)\).  On the event
\(\norm{x_a}^2\le u_m(q)\le u_0\) for all \(a\), the exponent in
\eqref{eq:finite-set-density-ratio} is bounded above by a constant depending
only on \((s,\rhoV,m,u_0)\).  Thus \(\mathcal R_R\le C_s\) on this event, uniformly
over the choice of the indices.  Integrating over the product event gives
\[
    \bP\bigl(\norm{Y_{i_a}}^2\le u_m(q),\ 1\le a\le s\bigr)
    \le C_s\prod_{a=1}^s\bP(\chi_m^2\le u_m(q))
    =C_s q^s.
\]
This proves the lemma.
\end{proof}

\begin{lemma}[Asymptotic decoupling of separated adjacent blocks]\label{lem:separated-blocks}
Fix an integer \(r\ge1\).  Let \(q_N\downarrow0\) and let \(g_N\to\infty\) be an
integer sequence.  Let
\begin{equation}\label{eq:pN-section4}
    p_N:=p_1(q_N)=\bP(B_1(q_N)).
\end{equation}
Uniformly over all \(r\)-tuples of distinct indices
\(i_1,
\ldots,i_r\in\{1,\ldots,N-1\}\) satisfying
\begin{equation}\label{eq:separated-block-condition}
    \abs{i_a-i_b}>g_N+1,
    \qquad a\ne b,
\end{equation}
one has
\begin{equation}\label{eq:separated-block-factorization}
    \bP\left(\bigcap_{a=1}^{r}B_{i_a}(q_N)\right)
    =p_N^r(1+o(1)),
\end{equation}
where the \(o(1)\) term may depend on \((r,\rhoV,m,q_N,g_N)\) but is uniform in
the locations of the blocks.
\end{lemma}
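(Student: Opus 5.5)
Order the blocks as \(i_1<\cdots<i_r\), so that block \(a\) occupies sites \(\{i_a,i_a+1\}\) and consecutive blocks are separated by gaps \(i_{a+1}-(i_a+1)>g_N\).

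\textbf{Approach.} Use the same Radon--Nikodym comparison as in Lemmas~\ref{lem:uniform-two-point} and~\ref{lem:finite-set-small-ball}. The reference measure, however, will be the product of the \(r\) independent \emph{adjacent-pair} laws rather than the full product law. Let \(W=(Y_{i_1},Y_{i_1+1},\ldots,Y_{i_r},Y_{i_r+1})\in(\R^m)^{2r}\). Its covariance is \(R\otimes I_m\), where \(R\) is the \(2r\times 2r\) AR(1) correlation matrix on these sites. Let \(R_0\) be the block-diagonal matrix keeping only the \(r\) diagonal \(2\times2\) blocks \(\begin{pmatrix}1&\rhoV\\ \rhoV&1\end{pmatrix}\). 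Under \(R_0\otimes I_m\) the blocks are independent, and each has the law of \((Y_1,Y_2)\). Hence the probability of \(\bigcap_a B_{i_a}(q_N)\) under the reference law is exactly \(p_N^r\).

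\textbf{Density ratio.} The density ratio is
\[
\det(R)^{-m/2}\det(R_0)^{m/2}\exp\Bigl\{-\tfrac12\sum_{j,k}(R^{-1}-R_0^{-1})_{jk}\,w_j\cdot w_k\Bigr\}.
\]
Every off-block entry of \(R\) has absolute value at most \(|\rhoV|^{g_N+1}\), so \(\|R-R_0\|_{\mathrm{op}}\le 2r|\rhoV|^{g_N+1}=:\delta_N\to0\). From \eqref{eq:R-eigen-lower} (with \(s=2r\)), the smallest eigenvalues of both \(R\) and \(R_0\) are bounded below by \(c_{2r,\rhoV}\). Therefore
\[
\|R^{-1}-R_0^{-1}\|_{\mathrm{op}}\le c_{2r,\rhoV}^{-2}\delta_N,
\qquad
\bigl|\log\det R-\log\det R_0\bigr|\le 2r\,c_{2r,\rhoV}^{-1}\delta_N.
\]
For the determinants one could alternatively use the exact formula \eqref{eq:submatrix-determinant}, which gives the ratio \(\prod_a(1-\rhoV^{2(\text{gap}_a)})\to1\). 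On the event where every coordinate satisfies \(\|w_j\|^2\le u_m(q_N)\le1\), the exponent is bounded in absolute value by \(\tfrac12\cdot 2r\,c_{2r,\rhoV}^{-2}\delta_N\). This holds because \(\sum_{j,k}|M_{jk}|\,\|w_j\|\|w_k\|\le\|M\|_{\mathrm{op}}\sum_j\|w_j\|^2\).

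\textbf{Conclusion.} The density ratio therefore lies in \([e^{-C\delta_N},e^{C\delta_N}]\) on the relevant event, with \(C=C(r,\rhoV,m)\). This bound does not depend on the block locations. Integrating over the event under the reference law gives
\[
\bP\Bigl(\bigcap_a B_{i_a}(q_N)\Bigr)\in\bigl[e^{-C\delta_N},e^{C\delta_N}\bigr]\,p_N^r,
\]
which is \eqref{eq:separated-block-factorization}, with an \(o(1)\) term that depends only on \(g_N\) (and on \((r,\rhoV,m)\)).

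\textbf{Main obstacle.} The delicate step is obtaining a perturbation bound that is uniform in the locations. This requires the lower spectral bound for \(R_0\) as well as for \(R\); \(R_0\) is block diagonal, so its eigenvalues are \(1\pm\rhoV\). It also requires handling the operator-norm bound on the off-block part. For the off-block part I use the crude entrywise-to-operator bound \(\|E\|_{\mathrm{op}}\le 2r\max_{j,k}|E_{jk}|\). This is valid because \(r\) is fixed, so the implied constant does not grow with \(N\).
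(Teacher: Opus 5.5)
Your argument is correct and is essentially the paper's proof. Both compare the joint law to the product of $r$ independent adjacent-pair laws, under which the event has probability exactly $p_N^r$. Both then bound the log density ratio uniformly on the block event through an operator-norm perturbation of size $O(|\rhoV|^{g_N})$. The only difference is cosmetic: the paper whitens each block by $\Gamma^{-1/2}$ so the reference measure is standard Gaussian, whereas you keep $R_0$ and use the lower eigenvalue bounds for $R$ and $R_0$.

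One small correction is needed. The inequality $\sum_{j,k}|M_{jk}|\,\|w_j\|\|w_k\|\le\|M\|_{\mathrm{op}}\sum_j\|w_j\|^2$ is false in general, because taking entrywise absolute values can increase the operator norm. You only need $|w^\top(M\otimes I_m)w|\le\|M\|_{\mathrm{op}}\|w\|^2$, which holds directly, so your bound on the exponent and the conclusion stand.
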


\begin{proof}
The case \(r=1\) is immediate, so assume \(r\ge2\).  For each block define
\[
    W_a=(Y_{i_a},Y_{i_a+1})\in\R^{2m},
    \qquad 1\le a\le r.
\]
Each \(W_a\) has covariance
\[
    \Gamma=
    \begin{pmatrix}1&\rhoV\\ \rhoV&1\end{pmatrix}\otimes I_m.
\]
Let \(\widetilde W_a=\Gamma^{-1/2}W_a\).  Under the product law of independent
blocks, \((\widetilde W_1,
\ldots,
\widetilde W_r)\) is standard Gaussian in
\(\R^{2mr}\).  Under the true joint law, its covariance matrix has the form
\[
    I_{2mr}+E_N,
\]
with zero diagonal \(2m\times2m\) blocks.  If \eqref{eq:separated-block-condition}
holds, then every site in block \(a\) is at distance at least \(g_N+1\) from every
site in block \(b\), for \(a\ne b\).  Therefore the unwhitened cross-covariance
between the two blocks has operator norm at most \(C_{\rhoV,m}\abs{\rhoV}^{g_N}\).
Since \(\Gamma^{-1/2}\) is fixed, there is a constant \(C_{r,\rhoV,m}\) such that
\begin{equation}\label{eq:EN-op-bound}
    \norm{E_N}_{\mathrm{op}}
    \le C_{r,\rhoV,m}\abs{\rhoV}^{g_N}
    =: \varepsilon_N
    \longrightarrow0.
\end{equation}
For all sufficiently large \(N\), \(\varepsilon_N<1/2\).

The density ratio of the true joint law of the whitened vector
\(z=(z_1,
\ldots,z_r)\in\R^{2mr}\) with respect to the product standard Gaussian law is
\begin{equation}\label{eq:block-density-ratio}
    \mathcal R_N(z)
    =\det(I_{2mr}+E_N)^{-1/2}
      \exp\left\{-\frac12 z^\top\bigl((I_{2mr}+E_N)^{-1}-I_{2mr}\bigr)z\right\}.
\end{equation}
By \eqref{eq:EN-op-bound},
\[
    \abs{\log\det(I_{2mr}+E_N)}\le C_{r,\rhoV,m}\varepsilon_N
\]
and
\[
    \norm{(I_{2mr}+E_N)^{-1}-I_{2mr}}_{\mathrm{op}}
    \le C_{r,\rhoV,m}\varepsilon_N.
\]
On the event \(\bigcap_a B_{i_a}(q_N)\), each unwhitened block satisfies
\(\norm{W_a}^2\le2u_m(q_N)\).  Since \(\Gamma^{-1/2}\) is fixed,
\begin{equation}\label{eq:z-bound-block-event}
    \norm{z}^2\le C_{r,\rhoV,m}u_m(q_N).
\end{equation}
As \(q_N\downarrow0\), \(u_m(q_N)\downarrow0\).  Combining
\eqref{eq:block-density-ratio}--\eqref{eq:z-bound-block-event}, we get
\[
    \sup_{z\in\cap_a B_{i_a}(q_N)}\abs{\log\mathcal R_N(z)}=o(1),
\]
uniformly over all block locations satisfying \eqref{eq:separated-block-condition}.
Thus \(\mathcal R_N=1+o(1)\) uniformly on the block event.  Integrating with
respect to the product block law gives
\[
    \bP\left(\bigcap_{a=1}^{r}B_{i_a}(q_N)\right)
    =(1+o(1))\prod_{a=1}^{r}\bP(B_{i_a}(q_N))
    =p_N^r(1+o(1)),
\]
which is \eqref{eq:separated-block-factorization}.
\end{proof}

\subsection{Poisson limit for threshold adjacent pairs}

We next prove the critical Poisson limit for the threshold set.  This is the
probabilistic core of the exact lowest-\(K\) result.

\begin{proposition}[Threshold adjacent-pair Poisson limit]\label{prop:threshold-poisson}
Let \((q_N)\) satisfy
\begin{equation}\label{eq:q-critical-threshold}
    \sqrt N\,q_N\longrightarrow a
    \qquad\text{for some }a\in(0,\infty).
\end{equation}
Then
\begin{equation}\label{eq:threshold-poisson-limit}
    A_N(q_N)
    \Rightarrow
    \operatorname{Poisson}\bigl(\vartheta_{\rhoV,m}a^2\bigr).
\end{equation}
\end{proposition}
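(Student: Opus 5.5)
We prove the Poisson limit by the method of factorial moments. Since the Poisson law is determined by its moments, it suffices to show that for every fixed integer \(r\ge1\),
\[
    \bE\bigl[(A_N(q_N))_r\bigr]
    =\sum_{(i_1,\ldots,i_r)\ \text{distinct}}
    \bP\Bigl(\bigcap_{a=1}^r B_{i_a}(q_N)\Bigr)
    \longrightarrow (\vartheta_{\rhoV,m}a^2)^r .
\]
The first moment is immediate. By stationarity and Lemma~\ref{lem:fixed-distance-small-ball} with \(h=1\),
\[
    \bE A_N(q_N)=(N-1)p_1(q_N)
    =(N-1)(1-\rhoV^2)^{-m/2}q_N^2(1+o(1))
    \to\vartheta_{\rhoV,m}a^2 .
\]

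For general \(r\), choose a separation scale \(g_N\to\infty\) growing slowly, for instance \(g_N=\lceil\log N\rceil\). Split the ordered \(r\)-tuples of distinct indices into two classes.

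\emph{Well-separated tuples.} These satisfy \(|i_a-i_b|>g_N+1\) for all \(a\ne b\). By Lemma~\ref{lem:separated-blocks}, each contributes \(p_N^r(1+o(1))\) uniformly. Their number is \(N^r(1+O(rg_N/N))\), so this class contributes
\[
    N^r p_N^r(1+o(1))\to(\vartheta_{\rhoV,m}a^2)^r .
\]

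\emph{Clustered tuples.} These have at least one pair within distance \(g_N+1\). Group such a tuple into clusters by connecting indices at distance at most \(g_N+1\), and let \(s\) be the number of distinct sites in the union of the blocks \(\{i_a,i_a+1\}\). Let \(c<r\) be the number of clusters.

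Lemma~\ref{lem:finite-set-small-ball} bounds the probability of the joint event by \(C_sq_N^s\). The number of tuples with a given cluster structure is at most \(N^c(rg_N)^{r-c}\). The key combinatorial point is that a cluster containing \(k\ge2\) adjacent blocks uses at least \(k+1\) distinct sites. Hence \(s\ge r+c\). The clustered contribution is therefore bounded by
\[
    \sum_{c<r} C\,N^c g_N^{\,r-c} q_N^{\,r+c}
    \asymp \sum_{c<r} N^{c}g_N^{\,r-c}N^{-(r+c)/2}
    = \sum_{c<r} g_N^{\,r-c}N^{-(r-c)/2}\to0,
\]
because \(g_N\) grows only logarithmically.

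One subtlety in the count: when \(s>r+c\), the extra power of \(q_N\) only helps. When the sites overlap, \(s\) is computed on the union, and each cluster of \(k\) blocks still spans at least \(k+1\) sites. For blocks at moderate distance that do not overlap, \(s\) is larger, roughly \(2k\), which is even better. So the bound \(s\ge r+c\) covers all cases.

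The main obstacle is this clustered-tuple estimate. It must handle overlapping blocks, such as \(i\) and \(i+1\) sharing a site, and still give uniform constants. Lemma~\ref{lem:finite-set-small-ball} supplies exactly that uniformity, since its constant \(C_s\) does not depend on the locations of the sites. It then remains only to check the exponent inequality \(s\ge r+c\) and that \(g_N^{r}N^{-1/2}\to0\).

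Combining the two classes, every factorial moment converges to \((\vartheta_{\rhoV,m}a^2)^r\). The Poisson distribution is determined by its moments, so \(A_N(q_N)\Rightarrow\operatorname{Poisson}(\vartheta_{\rhoV,m}a^2)\).
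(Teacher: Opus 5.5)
Your proposal is correct and follows essentially the same route as the paper. Both arguments use factorial moments with the separation scale $g_N=\lceil\log N\rceil$, apply Lemma~\ref{lem:separated-blocks} to well-separated tuples, and handle clustered tuples with the configuration count $O(N^c g_N^{\,r-c})$, the site bound $s\ge r+c$, and Lemma~\ref{lem:finite-set-small-ball}, before concluding by moment determinacy of the Poisson law.
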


\begin{proof}
Set \(p_N=p_1(q_N)\).  By Lemma~\ref{lem:fixed-distance-small-ball},
\begin{equation}\label{eq:pN-critical-asymptotic}
    p_N=\vartheta_{\rhoV,m}q_N^2(1+o(1)).
\end{equation}
Hence the mean
\begin{equation}\label{eq:threshold-mean-critical}
    \mu_N:=\bE A_N(q_N)=(N-1)p_N
\end{equation}
satisfies
\begin{equation}\label{eq:muN-critical-limit}
    \mu_N\longrightarrow \mu:=\vartheta_{\rhoV,m}a^2.
\end{equation}

We prove convergence of all factorial moments.  Let \((x)_r=x(x-1)\cdots(x-r+1)\).
For fixed \(r\ge1\),
\begin{equation}\label{eq:factorial-moment-sum}
    \bE\bigl[(A_N(q_N))_r\bigr]
    =\sum_{\mathbf i\in\mathcal I_{N,r}}
      \bP\left(\bigcap_{a=1}^{r}B_{i_a}(q_N)\right),
\end{equation}
where \(\mathcal I_{N,r}\) is the set of ordered \(r\)-tuples of distinct indices
from \(\{1,\ldots,N-1\}\).

Choose
\begin{equation}\label{eq:gN-critical-choice}
    g_N=\lceil\log N\rceil.
\end{equation}
Call \(\mathbf i=(i_1,
\ldots,i_r)\in\mathcal I_{N,r}\) good if
\(\abs{i_a-i_b}>g_N+1\) for all \(a\ne b\), and bad otherwise.  The number of bad
ordered tuples is \(O_r(N^{r-1}g_N)\), so the number of good tuples is
\begin{equation}\label{eq:good-tuple-count}
    N^r(1+o(1)).
\end{equation}
By Lemma~\ref{lem:separated-blocks}, the contribution of good tuples to
\eqref{eq:factorial-moment-sum} is
\begin{equation}\label{eq:good-tuple-contribution}
    N^r p_N^r(1+o(1))
    = (N p_N)^r(1+o(1))
    \longrightarrow \mu^r.
\end{equation}

It remains to show that bad tuples contribute \(o(1)\).  Sort the starting points
of a bad tuple as \(j_1<\cdots<j_r\), and partition them into clusters by placing
a break between \(j_\ell\) and \(j_{\ell+1}\) exactly when
\(j_{\ell+1}-j_\ell>g_N+1\).  If there are \(c\) clusters, then a bad tuple has
\(1\le c\le r-1\).  For a fixed value of \(c\), the number of possible sorted
clustered configurations is at most
\begin{equation}\label{eq:cluster-count}
    C_r N^c(g_N+1)^{r-c},
\end{equation}
and passing from sorted to ordered tuples only changes the constant \(C_r\).

A cluster containing \(e\) distinct adjacent-pair starts involves at least
\(e+1\) distinct sites of the underlying chain.  Since different clusters are
separated, a configuration with \(r\) edge starts and \(c\) clusters involves at
least \(r+c\) distinct sites.  The event attached to the tuple forces all these distinct sites to be below the
same threshold.  Since \(q_N\downarrow0\), Lemma~\ref{lem:finite-set-small-ball}
may be applied uniformly over all possible values of \(s\le2r\).  The probability
attached to any such tuple is therefore bounded by
\begin{equation}\label{eq:bad-tuple-prob-bound}
    C_r q_N^{r+c},
\end{equation}
for all sufficiently large \(N\), after increasing \(C_r\) if necessary.  The total bad contribution is thus at most
\begin{equation}\label{eq:bad-contribution-bound}
    C_r\sum_{c=1}^{r-1}N^c(g_N+1)^{r-c}q_N^{r+c}.
\end{equation}
Since \(q_N=O(N^{-1/2})\) and \(g_N=O(\log N)\), each summand in
\eqref{eq:bad-contribution-bound} is bounded by
\[
    C_r(\log N)^{r-c}N^{c-(r+c)/2}
    =C_r(\log N)^{r-c}N^{-(r-c)/2},
\]
which tends to zero because \(c\le r-1\).  Hence the total bad contribution is
\(o(1)\).

Combining the good and bad contributions gives
\[
    \bE\bigl[(A_N(q_N))_r\bigr]\longrightarrow \mu^r
    \qquad\text{for every fixed }r\ge1.
\]
The convergence of factorial moments implies convergence of the corresponding
ordinary moments, because ordinary moments are finite linear combinations of
factorial moments of lower order.  The bounded first moments give tightness.  For
each fixed integer \(r\), bounded \((r+1)\)-st moments imply uniform integrability
of the \(r\)-th powers, so every subsequential weak limit has the Poisson moments
\(\mu^r\) in factorial form.  The Poisson distribution is moment-determinate
because its moment-generating function is finite in a neighborhood of the
origin.  The method of moments therefore yields
\eqref{eq:threshold-poisson-limit}.
\end{proof}

\subsection{The exact lowest-\texorpdfstring{$K$}{K} critical window}

We now transfer the threshold Poisson limit to the exact order statistic and prove Theorem~\ref{thm:exact-critical-poisson}.  The argument is a two-sided threshold sandwich.  It is important that the threshold set has size of order \(\sqrt N\) but fluctuations only of order \(N^{1/4}\), so fixed multiplicative perturbations of the threshold density contain the random lowest-\(K\) cutoff with high probability.

\begin{proof}[Proof of Theorem~\ref{thm:exact-critical-poisson}]
Fix \(\varepsilon\in(0,1)\), and define
\begin{equation}\label{eq:qpm-definition}
    q_N^-=(1-\varepsilon)\frac{K_N}{N},
    \qquad
    q_N^+=(1+\varepsilon)\frac{K_N}{N}.
\end{equation}
For all large \(N\), both numbers lie in \((0,1)\), and
\[
    \sqrt N q_N^\pm\longrightarrow (1\pm\varepsilon)\lambda.
\]
Let
\[
    M_N^\pm=|T_N(q_N^\pm)|,
    \qquad
    A_N^\pm=A_N(q_N^\pm).
\]
By Lemma~\ref{lem:threshold-concentration}, or directly by the variance bound
\eqref{eq:threshold-variance-bound},
\begin{align}
    \bP(M_N^->K_N)
    &\le
    \frac{\Var(M_N^-)}{(K_N-Nq_N^-)^2}
    \le
    \frac{C_{\mathrm{var}}Nq_N^-}{\varepsilon^2K_N^2}
    \longrightarrow0,\label{eq:lower-size-sandwich}\\[4pt]
    \bP(M_N^+<K_N)
    &\le
    \frac{\Var(M_N^+)}{(Nq_N^+-K_N)^2}
    \le
    \frac{C_{\mathrm{var}}Nq_N^+}{\varepsilon^2K_N^2}
    \longrightarrow0.\label{eq:upper-size-sandwich}
\end{align}
On the event
\begin{equation}\label{eq:sandwich-event}
    E_N(\varepsilon):=\{M_N^-\le K_N\le M_N^+\},
\end{equation}
the order statistic satisfies
\begin{equation}\label{eq:set-sandwich}
    T_N(q_N^-)
    \subseteq S_{N,K_N}
    \subseteq T_N(q_N^+).
\end{equation}
Indeed, if \(M_N^-\le K_N\), every site below the lower threshold is among the
\(K_N\) smallest scores; if \(M_N^+\ge K_N\), the \(K_N\)-th smallest score is at
most the upper threshold.  Hence, on \(E_N(\varepsilon)\),
\begin{equation}\label{eq:adjacent-sandwich}
    A_N^-\le A(S_{N,K_N})\le A_N^+.
\end{equation}
By \eqref{eq:lower-size-sandwich} and \eqref{eq:upper-size-sandwich},
\(\bP(E_N(\varepsilon))\to1\).

Proposition~\ref{prop:threshold-poisson} gives
\begin{equation}\label{eq:Apm-poisson}
    A_N^\pm
    \Rightarrow
    \operatorname{Poisson}\bigl(\vartheta_{\rhoV,m}(1\pm\varepsilon)^2\lambda^2\bigr).
\end{equation}
Let \(F_\gamma(\ell)=\bP(\operatorname{Poisson}(\gamma)\le\ell)\).  For every
integer \(\ell\ge0\), the event \(E_N(\varepsilon)\) and the sandwich
\eqref{eq:adjacent-sandwich} give
\[
    \bP(A_N^+\le \ell)-\bP(E_N(\varepsilon)^c)
    \le
    \bP(A(S_{N,K_N})\le \ell)
    \le
    \bP(A_N^-\le \ell)+\bP(E_N(\varepsilon)^c).
\]
Using \(\bP(E_N(\varepsilon)^c)\to0\) and \eqref{eq:Apm-poisson}, we obtain
\begin{align}
    F_{\vartheta_{\rhoV,m}(1+\varepsilon)^2\lambda^2}(\ell)
    &\le
    \liminf_{N\to\infty}\bP(A(S_{N,K_N})\le\ell)
    \notag\\
    &\le
    \limsup_{N\to\infty}\bP(A(S_{N,K_N})\le\ell)
    \le
    F_{\vartheta_{\rhoV,m}(1-\varepsilon)^2\lambda^2}(\ell).
    \label{eq:cdf-sandwich}
\end{align}
Finally let \(\varepsilon\downarrow0\).  The Poisson distribution function is
continuous in its mean parameter, so the lower and upper bounds in
\eqref{eq:cdf-sandwich} both converge to
\(F_{\vartheta_{\rhoV,m}\lambda^2}(\ell)\).  Therefore the distribution functions
of \(A(S_{N,K_N})\) converge at every integer \(\ell\).  Since all variables
are supported on \(\mathbb N_0\), differences of consecutive cdf values give
convergence of every point mass, and hence weak convergence.  This proves
\eqref{eq:exact-critical-poisson}.  Taking \(\ell=0\) gives
\eqref{eq:exact-critical-zero-prob}.
\end{proof}

\phantomsection
\end{document}